\newcommand{\RED}[1]{{\color{black}#1}}
\newcommand{\diag}{\mbox{diag}}
\newtheorem{theorem}{Theorem}
\newtheorem{lemma}{Lemma}
\newtheorem{corollary}{Corollary}
\newtheorem{remark}{Remark}
\title{Robust Mean Estimation in High Dimensions via $\ell_0$ Minimization}
\author{%
  Jing~Liu \\
 Coordinated Science Laboratory\\
  University of Illinois at Urbana-Champaign\\
  Urbana, IL 61801 \\
  \texttt{jil292@illinois.edu} \\
   \And
  Aditya~Deshmukh \\
 Coordinated Science Laboratory\\
  University of Illinois at Urbana-Champaign\\
  Urbana, IL 61801 \\
  \texttt{ad11@illinois.edu} \\
  \AND
 Venugopal~V.~Veeravalli\\
 Coordinated Science Laboratory\\
  University of Illinois at Urbana-Champaign\\
  Urbana, IL 61801 \\
  \texttt{vvv@illinois.edu} \\
}
\begin{document}

\maketitle

\begin{abstract}
  We study the robust mean estimation problem in high dimensions, where $\alpha <0.5$ fraction of the data points can be arbitrarily corrupted. 
  Motivated by compressive sensing, we formulate the robust mean estimation problem as the minimization of the  $\ell_0$-`norm' of the \emph{outlier indicator vector}, under second moment constraints on the inlier data points. We prove that the global minimum of this objective is order optimal for the robust mean estimation problem, and we propose a general framework for minimizing the objective.  
We further leverage the $\ell_1$ and $\ell_p$ $(0<p<1)$,  minimization techniques in compressive sensing to provide computationally tractable solutions to the $\ell_0$ minimization problem. Both synthetic and real data experiments demonstrate that the proposed algorithms significantly outperform state-of-the-art robust mean estimation methods. 
\end{abstract}

\section{Introduction}

Robust mean estimation in high dimensions has received considerable interest recently, and has found applications in areas such as data analysis and distributed learning. Classical robust mean estimation methods such as coordinate-wise median and geometric median have error bounds that scale with the dimension of the data \cite{LRV}, which results in poor performance in the high dimensional regime. A notable exception is Tukey's Median~\cite{Tukey1975MathematicsAT} that has an error bound that is independent of the dimension, when the fraction of outliers is less than a threshold 
~\cite{donoho1992breakdown,zhu2020does}. However, the computational complexity of Tukey's Median algorithm is exponential in the dimension. 

A number of recent papers have proposed polynomial-time algorithms that have dimension independent error bounds under certain distributional assumptions (e.g., bounded covariance or concentration properties). For a comprehensive survey on robust mean estimation, we refer the interested readers to~\cite{diakonikolas2019recent}.
One of the first such algorithms is Iterative Filtering~\cite{7782980,diakonikolas2017being,steinhardt2018robust}, in which one finds the top eigenvector of the sample covariance matrix and removes (or down-weights) the points with large projection scores on that eigenvector, and then repeats this procedure on the rest of points until the top eigenvalue is small. However, as discussed in~\cite{NIPS2019_8839}, the drawback of this approach is that it only looks at one direction/eigenvector at a time, and the outliers may not exhibit unusual bias in only one direction or lie in a single cluster. Figure~\ref{fig:LpApproximation} illustrates an example for which Iterative Filtering might have poor empirical performance. In this figure, the inlier data points in blue are randomly generated from the standard Gaussian distribution in high dimension $d$, and therefore their $\ell_2$-distances to the origin are all roughly $\sqrt{d}$. There are two clusters of outliers in red, and their $\ell_2$-distances to the origin are also roughly $\sqrt{d}$. If there is only one cluster of outliers, Iterative Filtering can effectively identify them, however, in this example, this method may remove many inlier points and perform suboptimally.

\begin{figure*}[!h]
	\centering
	\includegraphics[trim=280 0 0 0, clip=true,width=0.24\linewidth]{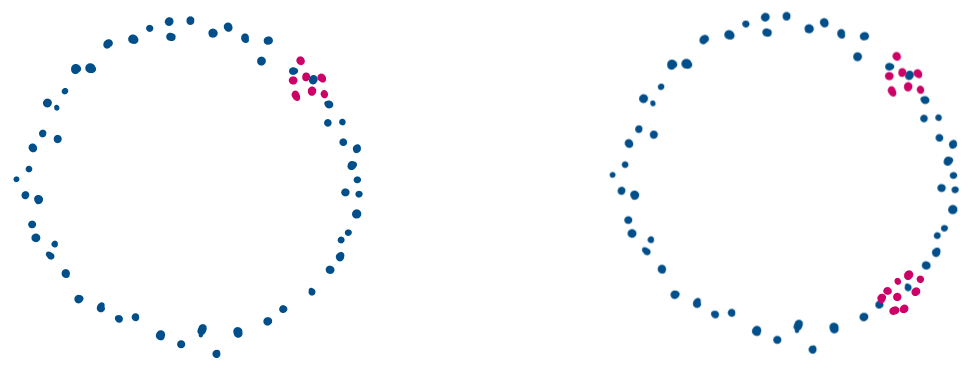}
	\caption{Illustration of two clusters of outliers (red points). The inlier points (blue) are drawn from standard Gaussian distribution in high dimension $d$. Both the outliers and inliers are roughly $\sqrt{d}$ distance from the origin.}
	\label{fig:LpApproximation}
\end{figure*}

There are interesting connections between existing methods for robust mean estimation and those used in Compressive Sensing (CS). The Iterative Filtering algorithm has similarities to the greedy Matching Pursuit type CS  algorithm~\cite{Mallat1993MatchingPW}. In the latter algorithm, one finds a single column of sensing matrix $\bm A$ that has largest correlation with the measurements $\bm b$, removes that column and its contribution from $\bm b$, and repeats this procedure on remaining columns of $\bm A$. 
In light of this, we expect Iterative Filtering to have poor empirical performance despite having order optimality guarantees.
Dong et al.~\cite{NIPS2019_8839} proposed a new scoring criteria for finding outliers, in which one looks at multiple directions associated with large eigenvalues of the sample covariance matrix in every iteration of the algorithm. Interestingly, this approach is conceptually similar to  Iterative Thresholding techniques in CS (e.g., Iterative Hard Thresholding~\cite{blumensath2009iterative} or Hard Thresholding Pursuit~\cite{foucart2011hard}), in which one simultaneously finds multiple columns of matrix $\bm A$ that are more likely contribute to $\bm b$. Although this type of approach is also greedy, it is more accurate than the Matching Pursuit technique in practice. 

A common assumption in robust mean estimation problem is that the fraction of the corrupted data points is small. In this paper, we explicitly use this information 
through the introduction of an  \textit{outlier indicator vector} whose  $\ell_0$-`norm' we minimize under distributional constraints on the uncorrupted data points. This new formulation enables us to leverage the well-studied CS techniques to solve the robust mean estimation problem.

We consider the setting wherein the distribution of the uncorrupted data points has bounded covariance, as is commonly assumed in many recent works (e.g.,~\cite{diakonikolas2017being,NIPS2019_8839,convex,steinhardt2017resilience}). 
In particular, in ~\cite{convex}, in addition to assuming the bounded covariance constraints on the uncorrupted data points, the authors also assume that the fraction $\alpha$ of the outlier points is known. They propose to minimize the spectral norm of the weighted sample covariance matrix and use the knowledge of outlier fraction $\alpha$ to constrain the weights. In contrast, we do \textit{not} assume the knowledge of the outlier fraction $\alpha$, which is usually not known in practice. Additionally, our strategy to update the estimate of the mean is more straightforward and different from that given in ~\cite{convex}.

\textcolor{black}{Lastly, we remind interested readers that there is another line of related works on mean estimation of heavy tailed distributions. See a recent survey~\cite{lugosi2019mean} and the references therein.} 


\paragraph{Contributions}
\begin{itemize}
\item At a fundamental level, a key contribution of this paper is the formulation of the robust mean estimation problem as minimizing the $\ell_0$-`norm' of the proposed \emph{outlier indicator vector}, under distributional constraints on the uncorrupted data points. 
%
\item We provide a theoretical justification for this novel objective. We further propose a general iterative framework for minimizing this objective, which will terminate in a finite number of iterations. 
\item Under this formulation, we are able to leverage  powerful $\ell_p  (0<p\leq 1)$ minimization techniques from Compressive Sensing to solve the robust mean estimation problem. We demonstrate via simulations that our algorithms significantly outperform the state-of-the-art methods in robust mean estimation.
\end{itemize}
\section{Objective function}
We begin by introducing some definitions and notation.
\newtheorem{definition}{Definition}
\begin{definition}\label{def1}
($\alpha$-corrupted samples~\cite{NIPS2019_8839}) Let $P$ be a distribution on $\mathbb{R}^d$ with unknown mean $\bm x^*$. We first have $\tilde{\bm y}_1,...,\tilde{\bm y}_n$ i.i.d. drawn from $P$, then modified by an adversary who \textcolor{black}{can} inspect all the samples, remove $\alpha n$ of them, and replace them with arbitrary vectors in $\mathbb{R}^d$, then we get an $\alpha$-corrupted set of samples, denoted as $\bm y_1,...,\bm y_n$.  
\end{definition}

There are other types of contamination one can consider, for e.g., Huber's $\epsilon$-contamination model~\cite{huber1964}. The contamination model described in Definition \ref{def1} is the strongest in the sense that the adversary is not oblivious to the original data points, and can replace any subset of $\alpha n$ data points with any vectors in $\mathbb{R}^d$. We refer the reader to~\cite{diakonikolas2019recent} for a more detailed discussion on contamination models.
\begin{definition}
(Resilience~\cite{steinhardt2017resilience}) A set of points $\bm y_1,...,\bm y_m$ lying in $\mathbb{R}^d$ is ($\sigma$,$\beta$)-resilient in $\ell_2$-norm around a point $\bm x$ if, for all its subsets $\bm T$ of size at least $(1-\beta)m$, $\left \|\frac{1}{|\bm T|}\sum_{\bm y_i\in \bm T} \bm y_i - \bm x \right \|_2 \leq \sigma$.  
\end{definition}

Our primary goal is to robustly estimate the mean of the uncorrupted data points given a set of $\alpha$-corrupted samples. To explicitly utilize the knowledge that the fraction of the corrupted points is small, we introduce an \textit{outlier indicator vector} $\bm h \in \mathbb{R}^n$: for the $i$-th data point, $h_i$ indicates that whether it is an outlier ($h_i \neq 0$) or not ($h_i= 0$). We minimize the $\ell_0$-`norm' of $\bm h$ under a second moment constraint on the inlier points.

Here we only impose a second moment assumption. Since we did not make any assumption on further higher moments, it may be possible for a few uncorrupted samples to affect the empirical covariance too much. Fortunately,~\cite{diakonikolas2017being} shows that such troublesome uncorrupted samples have only a small probability to occur:

Let $0<\epsilon<1$ be fixed. Let $\bm S=\lbrace\tilde{\bm y}_1,\dots,\tilde{\bm y}_n\rbrace$ be a set of $n\geq 3.2\times10^4\frac{d \log d}{\epsilon}+11.2\times10^4\frac{d}{\epsilon}$ samples drawn from a distribution $P$ with mean $\bm x^*$ and covariance matrix $\preceq \sigma^2 I$. Let $\bm G \triangleq \{\tilde{\bm y}_i|\|\tilde{\bm y}_i - \bm x^* \|_2 \leq \sigma\sqrt{40d/\epsilon}\}$ denote the number of samples which are less than $\sigma\sqrt{40d/\epsilon}$ distance away from $\bm x^*$. It follows from~\cite[Lemma A.18 (ii)]{diakonikolas2017being} that
\begin{equation}\label{eq:prob1}
    \mathrm{Pr}\left(|\bm G|\geq n- \epsilon n\right) \geq 39/40.
\end{equation}
We consider the \textit{far away} uncorrupted samples $\bm S\setminus \bm G$ as outliers also, without sacrificing performance significantly. Note that it is also possible to remove such samples through preprocessing~\cite{diakonikolas2019robust,NIPS2019_8839,convex}.


 Let $\lbrace \bm y_1,\dots,\bm y_n \rbrace$ be an $\alpha$-corrupted set of $\bm S$. Let $\bm h^*$ be such that $h_i^*=1$ for the outliers (both far away uncorrupted samples and corrupted samples), and $h_i^*=0$ for the rest of uncorrupted data points. Let $E$ be the event:

\begin{equation}\label{event}
    E = \left\lbrace \lambda_{\mathrm{max}}\left(\sum_{\tilde{\bm y}_i\in \bm G} (\tilde{\bm y}_i-\bm x^*)(\tilde{\bm y}_i-\bm x^*)^\top\right)\leq \frac{3}{2} n  \sigma^2 \right\rbrace.
\end{equation}

Note that the set of \textit{inliers} satisfies $\{\bm y_i | h_i^*=0\}=\{\tilde{\bm y}_i | h_i^*=0\}\subseteq \bm G$. 
Since $(\tilde{\bm y}_i-\bm x^*)( \tilde{\bm y}_i-\bm x^*)^\top$ is PSD, we must have 
\begin{align*}
    \lambda_{\mathrm{max}}\left(\sum_{i=1}^{n} (1-h_i^*)(\bm y_i-\bm x^*)(\bm y_i-\bm x^*)^\top\right)\leq \lambda_{\mathrm{max}}\left(\sum_{\tilde{\bm y}_i\in \bm G} (\tilde{\bm y}_i-\bm x^*)(\tilde{\bm y}_i-\bm x^*)^\top\right).
\end{align*}
This implies
\begin{equation}\label{event2}
    \left\lbrace\lambda_{\mathrm{max}}\left(\sum_{i=1}^{n} (1-h_i^*)(\bm y_i-\bm x^*)(\bm y_i-\bm x^*)^\top\right)\leq \frac{3}{2} n \sigma^2\right\rbrace\supseteq E.
\end{equation}
Then, we have:
\begin{equation}\label{goodPoints_cov}
\mathrm{Pr}\left\lbrace\lambda_{\mathrm{max}}\left(\sum_{i=1}^{n} (1-h_i^*)(\bm y_i-\bm x^*)(\bm y_i-\bm x^*)^\top\right)\leq \frac{3}{2} n \sigma^2\right\rbrace
\geq\mathrm{Pr}(E) \geq 39/40,
\end{equation}
where the last inequality follows from~\cite[Lemma A.18 (iv)]{diakonikolas2017being}.

These motivate us to propose the following objective:
\begin{align}\label{obj}
   \min_{\bm x, \bm h} \|\bm h\|_0 \quad
    s.t.\  &  0\leq h_i \leq 1, \forall i,\\
   & \lambda_{\mathrm{max}}\left(\sum_{i=1}^{n} (1-h_i)(\bm y_i-\bm x)(\bm y_i-\bm x)^\top \right)\leq c_1^2 n \sigma^2. \nonumber
\end{align}
Our intended solution is to have $h_i=0$ for the inlier points and $h_i=1$ for the outlier points.
The intuition behind the second moment constraint is based on the following key insight identified in previous works (e.g., ~\cite{diakonikolas2017being}): if the outliers shift the mean by $\Omega(\Delta)$, then they must shift the spectral norm of the covariance matrix by $\Omega(\Delta^2/\alpha)$. Notice that in~\eqref{goodPoints_cov}, the constant $\frac{3}{2}$ is based on $n=\Omega(\frac{d \log d}{\epsilon})$ samples. In the constraint of the proposed objective~\eqref{obj}, we use a general $c^2_1$ instead of $\frac{3}{2}$, and $c^2_1$ should be no less than $\frac{3}{2}$.


We first provide a theoretical justification for the $\ell_0$ minimization objective, and then give a general framework for solving \eqref{obj}, thereby obtaining a robust estimate of the mean. Before proceeding, we first introduce a lemma which is based on~\cite[Section 1.3]{steinhardt2017resilience}:

\begin{lemma}\label{boundedcovariance_resilience}
For a set of data points $\bm S\triangleq\{\bm y_i\}$, \textcolor{black}{let $\bm x =\frac{1}{|\bm S|}\sum_{\bm y_i \in \bm S} \bm y_i$}. If $\lambda_{\mathrm{max}}\left(\frac{1}{|\bm S|}\sum_{\bm y_i \in \bm S} (\bm y_i-\bm x)(\bm y_i-\bm x)^\top \right)\leq \sigma^2$, then the set $\bm S$ is $\left (2\sigma \sqrt{\beta},\beta \right)$-resilient in $\ell_2$-norm around $\bm x$ for all $\beta<0.5$.\\
\end{lemma}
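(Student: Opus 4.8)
The plan is to reduce the resilience bound to a Cauchy--Schwarz estimate on the contribution of the \emph{removed} points. Fix a subset $\bm T\subseteq\bm S$ with $|\bm T|\ge(1-\beta)|\bm S|$; write $m=|\bm S|$ and let $\bm T^c=\bm S\setminus\bm T$, so that $|\bm T^c|\le\beta m$. Since $\bm x$ is the empirical mean of $\bm S$, we have $\sum_{\bm y_i\in\bm S}(\bm y_i-\bm x)=\bm 0$, and hence $\sum_{\bm y_i\in\bm T}(\bm y_i-\bm x)=-\sum_{\bm y_i\in\bm T^c}(\bm y_i-\bm x)$. Consequently $\left\|\frac{1}{|\bm T|}\sum_{\bm y_i\in\bm T}\bm y_i-\bm x\right\|_2=\frac{1}{|\bm T|}\left\|\sum_{\bm y_i\in\bm T^c}(\bm y_i-\bm x)\right\|_2$, so it suffices to bound the norm of the sum over the small set $\bm T^c$.

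To bound $\left\|\sum_{\bm y_i\in\bm T^c}(\bm y_i-\bm x)\right\|_2$, I would test against an arbitrary unit vector $\bm v$. By Cauchy--Schwarz, $\left(\sum_{\bm y_i\in\bm T^c}\langle\bm y_i-\bm x,\bm v\rangle\right)^2\le|\bm T^c|\sum_{\bm y_i\in\bm T^c}\langle\bm y_i-\bm x,\bm v\rangle^2\le|\bm T^c|\sum_{\bm y_i\in\bm S}\langle\bm y_i-\bm x,\bm v\rangle^2$, where the last step simply adds back the nonnegative terms indexed by $\bm T$. The remaining sum equals $\bm v^\top\left(\sum_{\bm y_i\in\bm S}(\bm y_i-\bm x)(\bm y_i-\bm x)^\top\right)\bm v\le m\sigma^2$ by the covariance hypothesis. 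Taking the supremum over unit vectors $\bm v$ yields $\left\|\sum_{\bm y_i\in\bm T^c}(\bm y_i-\bm x)\right\|_2\le\sigma\sqrt{m\,|\bm T^c|}$.

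Combining the two steps, $\left\|\frac{1}{|\bm T|}\sum_{\bm y_i\in\bm T}\bm y_i-\bm x\right\|_2\le\frac{\sigma\sqrt{m\,|\bm T^c|}}{|\bm T|}\le\frac{\sigma\sqrt{\beta m^2}}{(1-\beta)m}=\frac{\sigma\sqrt\beta}{1-\beta}$, where I used $|\bm T^c|\le\beta m$ and $|\bm T|\ge(1-\beta)m$; these extremal choices are simultaneously attainable since $|\bm T|+|\bm T^c|=m$, and the right-hand expression is increasing in $|\bm T^c|$ and decreasing in $|\bm T|$. Finally, for $\beta<0.5$ we have $1-\beta>1/2$, so $\frac{1}{1-\beta}<2$ and the bound is at most $2\sigma\sqrt\beta$, which is precisely the statement that $\bm S$ is $(2\sigma\sqrt\beta,\beta)$-resilient in $\ell_2$-norm around $\bm x$.

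I do not anticipate a genuine obstacle: the argument is a one-line Cauchy--Schwarz inequality dressed up with the mean-zero centering identity. The only mild points to state carefully are (i) that dropping from $\sum_{\bm y_i\in\bm S}$ down to $\sum_{\bm y_i\in\bm T^c}$ is legitimate only because each summand $\langle\bm y_i-\bm x,\bm v\rangle^2$ is nonnegative, and (ii) the monotonicity justifying the substitution of the worst-case sizes $|\bm T^c|=\beta m$, $|\bm T|=(1-\beta)m$; the constant-loss step $\frac{1}{1-\beta}<2$ for $\beta<0.5$ is where the specific resilience constant $2\sigma\sqrt\beta$ comes from.
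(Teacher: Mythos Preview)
Your proof is correct. The paper does not actually give its own proof of this lemma; it simply attributes the result to \cite[Section 1.3]{steinhardt2017resilience}, and the argument you wrote is precisely the standard Cauchy--Schwarz proof appearing there: center at the empirical mean so that the deviation of the subset average is carried entirely by the removed points, then bound the removed-points sum via Cauchy--Schwarz and the spectral-norm hypothesis, picking up the factor $\sqrt{\beta}/(1-\beta)<2\sqrt{\beta}$ for $\beta<1/2$.
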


We now provide theoretical guarantees for the estimator which is given by the solution of the optimization problem \eqref{obj}. Assume that $\epsilon$ is fixed, which controls the estimator's error. We show that given $\alpha$-corrupted $\Omega\left(\frac{d\log d}{\epsilon}\right)$ samples, with high probability the $\ell_2$-norm of the estimator's error is bounded by $O(\sigma\sqrt{\alpha+\epsilon})$. We formalize this in the following theorem.\\
\begin{theorem}\label{guarantee}
Let $P$ be a distribution on $\mathbb{R}^d$ with unknown mean and unknown covariance matrix 	$\preceq \ \sigma^2 I$. Let $0<\epsilon<1/3$ be fixed. Let $0<\alpha < 1/3-\epsilon$. Given an $\alpha$-fraction corrupted set of $n\geq 3.2\times 10^4\frac{d \log d}{\epsilon}+11.2\times 10^4 \frac{d}{\epsilon}$ samples from $P$, and set $c^2_1\geq \frac{3}{2}$ in \eqref{obj}. With probability at least $0.95$, the globally optimal solution $(\bm h^{\mathrm{opt}}, \bm x^{\mathrm{opt}})$ of \eqref{obj} with $h_i^{\mathrm{opt}} \in \{0,1\}$ satisfies $\| \bm x^{\mathrm{opt}}-\textcolor{black}{\Bar{\bm x}^*}\|_2 \leq (4+3c_1)\sigma \sqrt{\alpha+\epsilon}$, \textcolor{black}{and any feasible solution $(\hat{\bm h}, \hat{\bm x})$ with ${\hat{h}_i} \in \{0,1\}$ and $\|\hat{\bm h}\|_0 \leq (\alpha+\epsilon)n$ satisfies $\| \hat{\bm x}-{ \Bar{\bm x}^*}\|_2 \leq (4+3c_1)\sigma \sqrt{\alpha+\epsilon}$.} \textcolor{black}{Where $\Bar{\bm x}^*$ is the average of inlier points corresponding to $h_i^*=0$ defined in \eqref{event}.}
\end{theorem}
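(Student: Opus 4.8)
The plan is to condition on two high–probability events, reduce the claim about the global optimum to the claim about an arbitrary feasible $(\hat{\bm h},\hat{\bm x})$ with $\|\hat{\bm h}\|_0\le(\alpha+\epsilon)n$, and then run a resilience argument (Lemma~\ref{boundedcovariance_resilience}) on the intersection of the retained set with the true inlier set $\bm I^*=\{i:h_i^*=0\}$. Write $\gamma=\alpha+\epsilon<1/3$. Condition on the event $\{|\bm G|\ge(1-\epsilon)n\}$ of~\eqref{eq:prob1} and on the event $E$ of~\eqref{event}; by a union bound over the two $39/40$ bounds these hold simultaneously with probability at least $38/40=0.95$. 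On the first event the outlier count obeys $\|\bm h^*\|_0\le\alpha n+|\bm S\setminus\bm G|\le\gamma n$, hence $|\bm I^*|\ge(1-\gamma)n$ with $\bm I^*\subseteq\bm G$. On $E$, by~\eqref{event2} and $c_1^2\ge 3/2$, the pair $(\bm h^*,\bm x^*)$ satisfies the constraint of~\eqref{obj}, so the feasible set is nonempty and $\|\bm h^{\mathrm{opt}}\|_0\le\|\bm h^*\|_0\le\gamma n$. Thus $(\bm h^{\mathrm{opt}},\bm x^{\mathrm{opt}})$ is itself a feasible solution of the kind appearing in the second assertion, and it suffices to prove that assertion. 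I take the mean estimate associated with a binary $\hat{\bm h}$ to be $\hat{\bm x}=\frac{1}{|\hat{\bm I}|}\sum_{i\in\hat{\bm I}}\bm y_i$, $\hat{\bm I}=\{i:\hat h_i=0\}$ (the estimate the algorithm returns); by the shift identity $\sum_{i\in\hat{\bm I}}(\bm y_i-\bm z)(\bm y_i-\bm z)^\top=\sum_{i\in\hat{\bm I}}(\bm y_i-\hat{\bm x})(\bm y_i-\hat{\bm x})^\top+|\hat{\bm I}|(\hat{\bm x}-\bm z)(\hat{\bm x}-\bm z)^\top$, this choice is feasible whenever any center is, and $|\hat{\bm I}|\ge(1-\gamma)n$.

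Next I produce bounded–covariance statements for both $\bm I^*$ and $\hat{\bm I}$ around their own sample means. For $\bm I^*$: since $\bm I^*\subseteq\bm G$, the summands are PSD, and the shift identity lets me drop the $|\bm I^*|(\bar{\bm x}^*-\bm x^*)(\cdot)^\top$ term, event $E$ gives $\lambda_{\mathrm{max}}\big(\frac{1}{|\bm I^*|}\sum_{i\in\bm I^*}(\bm y_i-\bar{\bm x}^*)(\bm y_i-\bar{\bm x}^*)^\top\big)\le\frac{3n}{2|\bm I^*|}\sigma^2=:\sigma_1^2$. For $\hat{\bm I}$: feasibility of $(\hat{\bm h},\hat{\bm x})$ gives $\lambda_{\mathrm{max}}\big(\frac{1}{|\hat{\bm I}|}\sum_{i\in\hat{\bm I}}(\bm y_i-\hat{\bm x})(\bm y_i-\hat{\bm x})^\top\big)\le\frac{c_1^2 n}{|\hat{\bm I}|}\sigma^2=:\sigma_2^2$. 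By Lemma~\ref{boundedcovariance_resilience}, $\bm I^*$ is $(2\sigma_1\sqrt\beta,\beta)$-resilient around $\bar{\bm x}^*$ and $\hat{\bm I}$ is $(2\sigma_2\sqrt\beta,\beta)$-resilient around $\hat{\bm x}$, for all $\beta<1/2$. Set $T=\bm I^*\cap\hat{\bm I}$. Then $|\bm I^*\setminus T|\le\|\hat{\bm h}\|_0\le\gamma n$ and $|\hat{\bm I}\setminus T|\le\|\bm h^*\|_0\le\gamma n$, so $\beta_1:=|\bm I^*\setminus T|/|\bm I^*|\le\gamma/(1-\gamma)<1/2$ and likewise $\beta_2:=|\hat{\bm I}\setminus T|/|\hat{\bm I}|<1/2$; this is exactly where $\gamma<1/3$ enters. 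Since $|T|=(1-\beta_1)|\bm I^*|=(1-\beta_2)|\hat{\bm I}|$, resilience applied to $T$ yields $\|\bar{\bm x}_T-\bar{\bm x}^*\|_2\le2\sigma_1\sqrt{\beta_1}$ and $\|\bar{\bm x}_T-\hat{\bm x}\|_2\le2\sigma_2\sqrt{\beta_2}$, where $\bar{\bm x}_T$ is the mean over $T$.

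Finally, combine by the triangle inequality: $\|\hat{\bm x}-\bar{\bm x}^*\|_2\le2\sigma_1\sqrt{\beta_1}+2\sigma_2\sqrt{\beta_2}$. Plugging $\sigma_1^2=\frac{3n}{2|\bm I^*|}\sigma^2$, $\beta_1\le\gamma n/|\bm I^*|$ and $\sigma_2^2=\frac{c_1^2 n}{|\hat{\bm I}|}\sigma^2$, $\beta_2\le\gamma n/|\hat{\bm I}|$ gives $2\sigma_1\sqrt{\beta_1}\le\frac{2n}{|\bm I^*|}\sqrt{\tfrac32\gamma}\,\sigma$ and $2\sigma_2\sqrt{\beta_2}\le\frac{2c_1 n}{|\hat{\bm I}|}\sqrt{\gamma}\,\sigma$. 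Using $|\bm I^*|,|\hat{\bm I}|\ge(1-\gamma)n>\tfrac23 n$, i.e. $n/|\bm I^*|,n/|\hat{\bm I}|<\tfrac32$, one gets $\|\hat{\bm x}-\bar{\bm x}^*\|_2<\big(3\sqrt{3/2}+3c_1\big)\sigma\sqrt{\gamma}\le(4+3c_1)\sigma\sqrt{\alpha+\epsilon}$ since $3\sqrt{3/2}<4$, which is the desired bound; applied to $(\bm h^{\mathrm{opt}},\bm x^{\mathrm{opt}})$ it gives the first assertion as well. I expect the main obstacle to be bookkeeping rather than conceptual: the covariance bounds must be phrased around the correct sample-mean centers so that Lemma~\ref{boundedcovariance_resilience} applies — forcing the two uses of the PSD shift identity — and then the constants coming from the two resilience bounds and the $1/(1-\gamma)$ factors have to be tracked carefully enough to stay under $4+3c_1$; the only genuine structural check is $\beta_1,\beta_2<1/2$, which is precisely the hypothesis $\alpha+\epsilon<1/3$.
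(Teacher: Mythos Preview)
Your proof is correct and follows essentially the same route as the paper: condition on the two $39/40$ events, show $(\bm h^*,\cdot)$ is feasible so that $\|\bm h^{\mathrm{opt}}\|_0\le(\alpha+\epsilon)n$, apply Lemma~\ref{boundedcovariance_resilience} to both retained sets, and use resilience on their intersection together with the triangle inequality. The only cosmetic differences are that the paper absorbs the factor $n/|\bm I^*|\le 1/(1-\alpha')\le 3/2$ into the resilience constants up front (getting $(3\sigma\sqrt\beta,\beta)$ and $(c_1\sqrt6\,\sigma\sqrt\beta,\beta)$) and uses a single $\beta=\alpha'/(1-\alpha')$, whereas you carry $|\bm I^*|,|\hat{\bm I}|$ and separate $\beta_1,\beta_2$ to the end; the final constant $3\sqrt{3/2}+3c_1<4+3c_1$ comes out identically either way.
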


\begin{proof}
Let $(\bm h^{\mathrm{opt}}, \bm x^{\mathrm{opt}})$ be the global optimal solution of \eqref{obj} with $h_i^{\mathrm{opt}} \in \{0,1\}$. Then 
$
\bm x^{\mathrm{opt}}=\frac{\sum_{i=1}^{n}(1-h_i^{\mathrm{opt}})\bm y_i}{\sum_{i=1}^{n}(1-h_i^{\mathrm{opt}})}$, i.e., $\bm x^{\mathrm{opt}}$ is the average of the $\bm y_i$'s corresponding to $h_i^{\mathrm{opt}}=0$. Note that for any global optimal solution of \eqref{obj}, by setting its non-zero $h_i$ to be 1, we can always get corresponding feasible $(\bm h^{\mathrm{opt}}, \bm x^{\mathrm{opt}})$ with $h_i^{\mathrm{opt}} \in \{0,1\}$, and the objective value remains unchanged.

Consider $\bm h^*$ as defined in~\eqref{event}. Let $\alpha'\triangleq \epsilon+\alpha< 1/3$. Let $\tilde E = \lbrace\|\bm h^*\|_0 \leq \alpha' n\rbrace \cap E$, where $E$ is as defined in \eqref{event}. It follows from \eqref{eq:prob1} and \eqref{goodPoints_cov} that
\begin{equation}
    \mathrm{Pr}(\tilde E)\geq 0.95.
\end{equation}


Then on the event $\tilde E$, it follows from \eqref{event2} and \textcolor{black}{the fact that $\lambda_{\mathrm{max}}\left(\sum_{i=1}^{n} (1-h_i^*)(\bm y_i-\Bar{\bm x}^*)(\bm y_i-\Bar{\bm x}^*)^\top\right) \leq \lambda_{\mathrm{max}}\left(\sum_{i=1}^{n} (1-h_i^*)(\bm y_i-\bm x^*)(\bm y_i-\bm x^*)^\top\right)$}, that the set $\bm S^* \triangleq \{\bm y_i | h_i^*=0\}$ is $\left (3\sigma \sqrt{\beta},\beta \right)$-resilient in $\ell_2$-norm around \textcolor{black}{$\Bar{\bm x}^*$} for all $\beta<0.5$ by Lemma~\ref{boundedcovariance_resilience}. We also have $|\bm S^*|\geq(1-\alpha')n$.

Since $(\bm h^{\mathrm{opt}}, \bm x^{\mathrm{opt}})$ is globally optimal, and $(\bm h^{*}, \Bar{\bm x}^{*})$ is feasible, we have $\|\bm h^{\mathrm{opt}}\|_0 \leq \|\bm h^*\|_0 \leq \alpha' n$. Thus $n-\|\bm h^{\mathrm{opt}}\|_0 \triangleq q \geq n-\alpha' n$. Note that
\begin{align}\label{covariance_opt}
\lambda_{\mathrm{max}}\left(\sum_{i=1}^{n} (1-h_i^{\mathrm{opt}})(\bm y_i-\bm x^{\mathrm{opt}})(\bm y_i-\bm x^{\mathrm{opt}})^\top\right)\leq {c^2_1  n \sigma^2}. 
\end{align}

Normalizing~\eqref{covariance_opt} by $q$ leads to
\begin{align}\label{covariance_opt_normalize}
\lambda_{\mathrm{max}}\left(\frac{1}{q} \sum_{i=1}^{n} (1-h_i^{\mathrm{opt}})(\bm y_i-\bm x^{\mathrm{opt}})(\bm y_i-\bm x^{\mathrm{opt}})^\top\right)\leq {c^2_1  \frac{n}{q} \sigma^2}\leq {c^2_1  \frac{n}{ n-\alpha' n} \sigma^2}. 
\end{align}

Because $h_i^{\mathrm{opt}} \in \{0,1\}$, \eqref{covariance_opt_normalize} implies that the set $\bm S^{\mathrm{opt}} \triangleq \{\bm y_i | h_i^{\mathrm{opt}}=0\}$ is $\left (c_1\sqrt{6}\sigma \sqrt{\beta},\beta \right)$-resilient in $\ell_2$-norm around $\bm x^{\mathrm{opt}}$ for all $\beta<0.5$ by Lemma~\ref{boundedcovariance_resilience}. We also have $|\bm S^{\mathrm{opt}}|\geq(1-\alpha')n$.

Let $\bm T \triangleq \bm S^* \cap \bm S^{\mathrm{opt}}$, and set $\beta=\frac{\alpha'}{1-\alpha'}$. Since $\alpha'<1/3$, we have $\beta <0.5$. One can verify that $|\bm T| \geq (1-\beta) \max  \left\{|\bm S^*|, |\bm S^{\mathrm{opt}}|\right\}$. Then, from the property of resilience in Definition 2, we have
\[
\left \|\frac{1}{|\bm T|}\sum_{\bm y_i\in \bm T} \bm y_i - \textcolor{black}{\Bar{\bm x}^*} \right \|_2 \leq 3\sigma \sqrt{\beta} \quad \mbox{and} \quad \left \|\frac{1}{|\bm T|}\sum_{\bm y_i\in \bm T} \bm y_i - \bm x^{\mathrm{opt}} \right \|_2 \leq c_1\sqrt{6}\sigma \sqrt{\beta}.
\]

By the triangle inequality, we obtain 
\[
\|\textcolor{black}{\Bar{\bm x}^*}-\bm x^{\mathrm{opt}}\|_2\leq (3+c_1\sqrt{6})\sigma \sqrt{\beta}=(3+c_1\sqrt{6})\sigma \sqrt{{\alpha'}/(1-\alpha')}< (4+3c_1)\sigma \sqrt{\alpha'}=(4+3c_1)\sigma \sqrt{\alpha+\epsilon}.
\]
\textcolor{black}{
Next, note that for any feasible solution of \eqref{obj}, by setting its non-zero $h_i$ to be 1, we can always get corresponding \emph{feasible} $(\hat{\bm h}, \hat{\bm x})$ with $\hat{h}_i \in \{0,1\}$ and $
\hat{\bm x}=\frac{\sum_{i=1}^{n}(1-\hat{h}_i)\bm y_i}{\sum_{i=1}^{n}(1-\hat{h}_i)}$, i.e., $\hat{\bm x}$ is the average of the $\bm y_i$'s corresponding to $\hat{h}_i=0$, and the objective value remains unchanged.
Since $\|\hat{\bm h}\|_0\leq \alpha' n$, following the same proof as above, we also have
\[
\|\Bar{\bm x}^*-\hat{\bm x}\|_2\leq (4+3c_1)\sigma \sqrt{\alpha+\epsilon}.
\]
}
\end{proof}
\begin{remark}
Observe that in Theorem 1, $\epsilon$ controls the error tolerance level, and the lower bound on the required number of samples is $\Omega(\frac{d \log d}{\epsilon})$ which is independent of the corruption level $\alpha$. Previous works (for e.g., cf.~\cite{7782980,diakonikolas2017being,steinhardt2018robust}) do not consider a tolerance level, and in these works the lower bound on the required number of samples is inverse proportional to the fraction of corruption $\alpha$, which blows up as $\alpha\to 0$. Moreover, $\alpha$ is typically unknown in practice.  Specifying $\epsilon$ to control the estimator's error helps us remove the dependence of the number of samples required on the fraction of corruption $\alpha$.
Note that we can recover the results in the form as given by the previous works by setting $\epsilon=O(\alpha)$ in Theorem 1. The following corollary states this result.\\
\end{remark}

\begin{corollary}
Let $P$ be a distribution on $\mathbb{R}^d$ with unknown mean $\bm x^*$ and unknown covariance matrix $\preceq \sigma^2 I$. Let $0<\alpha\leq 0.33$. Given an $\alpha$-fraction corrupted set of $n\geq 3.2\times 10^4\times 160\frac{d \log d}{\alpha}+11.2\times 10^4\times160 \frac{d}{\alpha}$ samples from $P$, and set $c^2_1\geq \frac{3}{2}$ in \eqref{obj}. With probability at least $0.949$, the globally optimal solution $(\bm h^{\mathrm{opt}}, \bm x^{\mathrm{opt}})$ of \eqref{obj} with $h_i^{\mathrm{opt}} \in \{0,1\}$ satisfies $\| \bm x^{\mathrm{opt}}-\textcolor{black}{\bm {x}^*}\|_2 \leq (7.5+3.1c_1)\sigma \sqrt{\alpha}$, \textcolor{black}{and any feasible solution $(\hat{\bm h}, \hat{\bm x})$ with ${\hat{h}_i} \in \{0,1\}$ and $\|\hat{\bm h}\|_0 \leq \frac{161}{160}\alpha n$ satisfies $\| \hat{\bm x}-{ {\bm x}^*}\|_2 \leq (7.5+3.1c_1)\sigma \sqrt{\alpha}$.}
\end{corollary}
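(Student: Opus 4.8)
The plan is to deduce the corollary from Theorem~\ref{guarantee} by setting $\epsilon\triangleq\alpha/160$ and then converting the bound around $\bar{\bm x}^*$ into one around the true mean $\bm x^*$ by the triangle inequality. First I would verify the hypotheses of Theorem~\ref{guarantee}: $0<\epsilon<1/3$ is immediate from $\alpha\le 0.33$; $0<\alpha<1/3-\epsilon$ holds because $\alpha+\epsilon=\tfrac{161}{160}\alpha\le\tfrac{161}{160}\cdot 0.33<\tfrac13$; and the requirement $n\ge 3.2\times10^4\tfrac{d\log d}{\epsilon}+11.2\times10^4\tfrac{d}{\epsilon}$ becomes exactly the corollary's hypothesis after substituting $\epsilon=\alpha/160$. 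Theorem~\ref{guarantee} then yields, with probability at least $0.95$, that $\|\bm x^{\mathrm{opt}}-\bar{\bm x}^*\|_2\le(4+3c_1)\sigma\sqrt{\alpha+\epsilon}=(4+3c_1)\sqrt{161/160}\,\sigma\sqrt{\alpha}$, and the same bound for every feasible $(\hat{\bm h},\hat{\bm x})$ with $\hat h_i\in\{0,1\}$ and $\|\hat{\bm h}\|_0\le(\alpha+\epsilon)n=\tfrac{161}{160}\alpha n$. So it remains to show $\|\bar{\bm x}^*-\bm x^*\|_2=O(\sigma\sqrt{\alpha})$ with a small enough constant.

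For that I would route through the sample mean $\bar{\bm x}_{\bm G}$ of the ``close'' uncorrupted samples $\bm G$. On the event $\tilde E$ from the proof of Theorem~\ref{guarantee} (probability $\ge 0.95$, and on which $|\bm G|\ge(1-\epsilon)n$ and $E$ hold), the inlier set $\bm S^*=\{\bm y_i\mid h_i^*=0\}$ is $\bm G$ with the at most $\alpha n$ samples of $\bm G$ that the adversary removed deleted, so $\bm S^*\subseteq\bm G$ and $|\bm S^*|\ge(1-\beta_0)|\bm G|$ for $\beta_0\triangleq\alpha n/|\bm G|\le\alpha/(1-\epsilon)<1/2$. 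Since, on $E$, $\lambda_{\mathrm{max}}\big(\tfrac{1}{|\bm G|}\sum_{\tilde{\bm y}_i\in\bm G}(\tilde{\bm y}_i-\bar{\bm x}_{\bm G})(\tilde{\bm y}_i-\bar{\bm x}_{\bm G})^\top\big)\le\lambda_{\mathrm{max}}\big(\tfrac{1}{|\bm G|}\sum_{\tilde{\bm y}_i\in\bm G}(\tilde{\bm y}_i-\bm x^*)(\tilde{\bm y}_i-\bm x^*)^\top\big)\le\tfrac{3}{2(1-\epsilon)}\sigma^2$, Lemma~\ref{boundedcovariance_resilience} makes $\bm G$ resilient around $\bar{\bm x}_{\bm G}$, giving $\|\bar{\bm x}^*-\bar{\bm x}_{\bm G}\|_2\le 2\sqrt{\tfrac{3}{2(1-\epsilon)}}\,\sigma\sqrt{\beta_0}=O(\sigma\sqrt{\alpha})$. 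For the remaining term $\|\bar{\bm x}_{\bm G}-\bm x^*\|_2$ I would use i.i.d.\ concentration: writing $\bar{\bm x}_{\bm G}-\bm x^*=\tfrac{n}{|\bm G|}\cdot\tfrac1n\sum_{i=1}^n(\tilde{\bm y}_i-\bm x^*)\mathbf{1}\{\|\tilde{\bm y}_i-\bm x^*\|_2\le\sigma\sqrt{40d/\epsilon}\}$, the bias $\mathbb{E}\big[(\tilde{\bm y}_i-\bm x^*)\mathbf{1}\{\|\tilde{\bm y}_i-\bm x^*\|_2\le\sigma\sqrt{40d/\epsilon}\}\big]=-\mathbb{E}\big[(\tilde{\bm y}_i-\bm x^*)\mathbf{1}\{\|\tilde{\bm y}_i-\bm x^*\|_2>\sigma\sqrt{40d/\epsilon}\}\big]$ has norm at most $\sigma\sqrt{\epsilon/40}$ by Cauchy--Schwarz, using that every unit-direction projection of $\tilde{\bm y}_i-\bm x^*$ has variance $\le\sigma^2$ and that $\mathrm{Pr}(\|\tilde{\bm y}_i-\bm x^*\|_2>t)\le d\sigma^2/t^2$, while the empirical average's deviation from the bias has squared expectation $\le d\sigma^2/n$ and hence is $O(\sigma\sqrt{\alpha/\log d})$ with probability at least $0.999$ by Markov's inequality; this last ingredient is precisely the kind of statement established in \cite[Lemma A.18]{diakonikolas2017being}. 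Thus $\|\bar{\bm x}_{\bm G}-\bm x^*\|_2=O(\sigma\sqrt{\epsilon})=O(\sigma\sqrt{\alpha})$.

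Intersecting $\tilde E$ with this $0.999$-probability concentration event gives an event of probability at least $0.95-0.001=0.949$ on which, by the triangle inequality, $\|\bm x^{\mathrm{opt}}-\bm x^*\|_2\le(4+3c_1)\sqrt{161/160}\,\sigma\sqrt{\alpha}+\|\bar{\bm x}^*-\bm x^*\|_2$, and likewise for $\hat{\bm x}$. Plugging in $\epsilon=\alpha/160$, using $\alpha\le 0.33$, and simplifying constants — the three contributions are at most about $(4.02+3.01c_1)\sigma\sqrt{\alpha}$ from Theorem~\ref{guarantee}, $2.46\,\sigma\sqrt{\alpha}$ from the resilience term, and $0.03\,\sigma\sqrt{\alpha}$ from the concentration term — yields a total below $(7.5+3.1c_1)\sigma\sqrt{\alpha}$. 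The statement for feasible $(\hat{\bm h},\hat{\bm x})$ with $\|\hat{\bm h}\|_0\le\tfrac{161}{160}\alpha n$ follows verbatim because $\tfrac{161}{160}\alpha n=(\alpha+\epsilon)n$ is exactly the feasibility budget in Theorem~\ref{guarantee}.

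The main obstacle, and essentially the only place beyond Theorem~\ref{guarantee} where real work is needed, is the $O(\sigma\sqrt{\epsilon})$ bound on $\|\bar{\bm x}_{\bm G}-\bm x^*\|_2$: one must control the truncation bias using the per-direction variance bound $\le\sigma^2$ rather than the cruder $\mathbb{E}\|\tilde{\bm y}_i-\bm x^*\|_2^2\le d\sigma^2$, since the latter would introduce a spurious $\sqrt{d}$ factor and destroy the dimension-free rate; one also has to absorb the extra $0.001$ of failure probability into the $0.949$ budget and keep the numerical constants tight enough that the final coefficient stays below $7.5+3.1c_1$. Verifying the hypotheses of Theorem~\ref{guarantee}, the resilience step via Lemma~\ref{boundedcovariance_resilience}, and the closing arithmetic are all routine.
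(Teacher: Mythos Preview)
Your proposal is correct and follows essentially the same route as the paper: set $\epsilon=\alpha/160$, invoke Theorem~\ref{guarantee}, pass from $\bar{\bm x}^*$ to the mean $\tilde{\bm x}$ of $\bm G$ via the resilience of $\bm G$ (Lemma~\ref{boundedcovariance_resilience}), and then bound $\|\tilde{\bm x}-\bm x^*\|_2$ by a bias/fluctuation argument using Cauchy--Schwarz for the truncation bias and Markov for the sample-average deviation. The paper packages that last step as a separate lemma (Lemma~\ref{lem:conc}) with a slightly different split (untruncated sample mean plus the far-away piece $\bm z_i$) and a looser constant $\sigma\sqrt{\alpha}$, whereas you sketch it inline and, by exploiting the larger sample-size hypothesis of the corollary, obtain a sharper $O(0.03\,\sigma\sqrt{\alpha})$; both comfortably land under $(7.5+3.1c_1)\sigma\sqrt{\alpha}$, and the probability bookkeeping ($0.95-0.001=0.949$) matches.
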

\begin{proof}

Set $\epsilon=\alpha/160$.
On the event $\tilde E$ (as defined in the proof of Theorem 1), we have $\lambda_{\mathrm{max}}\left(\sum_{\tilde{\bm y}_i\in \bm G} (\tilde{\bm y}_i-\bm x^*)(\tilde{\bm y}_i-\bm x^*)^\top\right)\leq \frac{3}{2} n  \sigma^2$. Further, let $\tilde{\bm x}$ be the average of samples in $\bm G$, then we have $\lambda_{\mathrm{max}}\left(\sum_{\tilde{\bm y}_i\in \bm G} (\tilde{\bm y}_i-\tilde{\bm x})(\tilde{\bm y}_i-\tilde{\bm x})^\top\right) \leq \lambda_{\mathrm{max}}\left(\sum_{\tilde{\bm y}_i\in \bm G} (\tilde{\bm y}_i-\bm x^*)(\tilde{\bm y}_i-\bm x^*)^\top\right)\leq \frac{3}{2} n  \sigma^2$. Then, the set $\bm G$ is $(8\sqrt{15/159}\sigma \sqrt{\beta},\beta )$-resilient in $\ell_2$-norm around $\tilde{\bm x}$ for all $\beta<0.5$ by Lemma~\ref{boundedcovariance_resilience}. Using this resilience property and the fact that $\{\bm y_i | h_i^*=0\}\subseteq \bm G$, we have $\|\textcolor{black}{ \Bar{\bm x}^*}-\tilde{\bm x}\|_2\leq 8\sqrt{15/159}\sigma \sqrt{\alpha/(1-\alpha/160)}\leq \frac{160}{159}\sqrt{6}\sigma \sqrt{\alpha}$. Finally, 
\textcolor{black}{from Theorem 1 and} by triangle inequality, we have $\|\bm x^{\mathrm{opt}}-\tilde{\bm x}\|_2 \leq [\frac{160}{159}\sqrt{6}+(4+3c_1)\sqrt{161/160}]\sigma \sqrt{\alpha}\leq (6.5+3.1c_1)\sigma \sqrt{\alpha}$ \textcolor{black}{as well as $\|\hat{\bm x}-\tilde{\bm x}\|_2 \leq (6.5+3.1c_1)\sigma \sqrt{\alpha}$.} Using Lemma \ref{lem:conc} in Appendix~\ref{sec_lemma2}, we get that with high probability, $\lVert \tilde{\bm x}-\bm x^* \rVert_2 \leq \sigma\sqrt{\alpha}$. Consequently, by taking intersection of the events in \eqref{eq:yx}, \eqref{eq:z} and \eqref{eq:G} from Lemma \ref{lem:conc}, and the event $\tilde{E}$ and applying triangle inequality, we obtain that with probability at least 0.949, $\| \bm x^{\mathrm{opt}}-\textcolor{black}{\bm {x}^*}\|_2 \leq (7.5+3.1c_1)\sigma \sqrt{\alpha}$ \textcolor{black}{and $\|\hat{\bm x}-\bm {x}^*\|_2 \leq (7.5+3.1c_1)\sigma \sqrt{\alpha}$.}

\end{proof}

\section{Algorithm}
In this section, we first provide a general framework for solving~\eqref{obj} by alternately updating the outlier indicator vector $\bm h$ and the estimate of the mean $\bm x$. \textcolor{black}{Note that the objective~\eqref{obj} is non-convex. Fortunately, it is known that several efficient algorithms like coordinate-wise median and geometric median can tolerate nearly half outlier points and their estimates are bounded from the true mean. So we can use them as a good initial point $\bm x^{(0)}$ in our algorithm. From Theorem 1, we know that actually any feasible solutions $\bm h$  (with ${\hat{h}_i} \in \{0,1\}$) which are sparse enough, would be sufficient.}

Since updating $\bm h$ (i.e., minimizing $\|\bm h\|_0$ under the constraints in step 1 in Algorithm 1) is computationally expensive, we propose to minimize the surrogate functions $\|\bm h\|_p^p$ with $0<p\leq 1$. The effectiveness of this approach is well understood in the Compressive Sensing literature. 
\subsection{General framework}
Our general framework for solving \eqref{obj} is detailed in Algorithm 1. In Step 1 of Algorithm 1, we fix the current estimate of the mean $\bm x$ and estimate the set of outlier points (corresponding to $h_i \neq0$). In Step 2, we update $\bm x$ as the average of the set of estimated inlier points. Then we repeat this procedure until the stopping criteria is met. The following theorem shows that the objective value is non-increasing through the course of the iterations of this alternating minimization algorithm.

\begin{table}[h]
  \label{Algorithm 1}
  \centering
  \begin{tabular}{lll}
    \toprule
    \multicolumn{1}{c}{\textbf{Algorithm 1} Robust Mean Estimation via $\ell_0$ Minimization}                   \\
    \midrule
    \textbf{Input:} Observations $\bm y_i, i=1,2,...,n$, upper bound \RED{$c^2_1 \sigma^2$}\\
    \textbf{Initialize:}    $\bm x^{(0)}$ as the Coordinate-wise Median of $\bm y_i, i=1,2,...,n$; iteration number $t=0$\\

    \textbf{While $\|\bm h^{(t)}\|_0 < \|\bm h^{(t-1)}\|_0$}        \\
 Step 1: Fix $\bm x^{(t)}$, update $\bm h$ \\

  $\bm h^{(t)}= \arg \min_{\bm h}  \|\bm h\|_0 $,
    s.t. $\  0\leq h_i \leq 1, \forall i$,
  $\lambda_{\mathrm{max}}(\sum_{i=1}^{n} (1-h_i)(\bm y_i-\bm x^{(t)})(\bm y_i-\bm x^{(t)})^\top)\leq c^2_1 n \sigma^2 $\\
 
  Step 2: Fix $\bm h^{(t)}$, update $\bm x$\\
   $\bm x^{(t+1)}=\frac{\sum_{\{i:h_i^{(t)}=0\}} \bm y_i}{|\{i:h_i^{(t)}=0\}|}$,\\

  $t:=t+1$\\
    \textbf{End While}\\
\textbf{Output:} $ \bm x$\\
    \bottomrule
  \end{tabular}
\end{table}

\begin{theorem}\label{converge}
Through consecutive iterations of Algorithm 1 the objective value in \eqref{obj} is non-increasing, and the algorithm  terminates in at most $n$ iterations.
\end{theorem}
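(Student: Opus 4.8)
The plan is to establish the two assertions separately: (a) that $\|\bm h^{(t)}\|_0$, which is exactly the objective value of \eqref{obj} at iterate $t$ (the objective does not depend on $\bm x$, and Step~2 does not touch $\bm h$), is non-increasing in $t$; and (b) that a strictly decreasing sequence of integers in $\{0,1,\dots,n\}$ must be finite, which bounds the number of loop iterations. Claim (b) is immediate once (a) and the loop condition are understood, so the work lies entirely in (a), and within (a) in showing that the Step~2 mean update never destroys feasibility of the current $\bm h$.

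For (a), fix an iteration $t$ and write $w_i \triangleq 1-h_i^{(t)} \ge 0$. The key identity is the parallel-axis (bias--variance) decomposition: for any $\bm x$,
\[
\sum_{i=1}^{n} w_i (\bm y_i-\bm x)(\bm y_i-\bm x)^\top
= \sum_{i=1}^{n} w_i (\bm y_i-\bar{\bm y})(\bm y_i-\bar{\bm y})^\top
+ \left(\sum_{i=1}^{n} w_i\right)(\bar{\bm y}-\bm x)(\bar{\bm y}-\bm x)^\top,
\]
where $\bar{\bm y} \triangleq \left(\sum_i w_i \bm y_i\right)/\left(\sum_i w_i\right)$ is precisely the Step~2 update $\bm x^{(t+1)}$ (well defined as long as $\|\bm h^{(t)}\|_0<n$, which holds whenever the algorithm has not already terminated). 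The last term on the right is PSD, so in the Loewner order the matrix evaluated at $\bm x = \bm x^{(t+1)}$ is dominated by the matrix evaluated at $\bm x = \bm x^{(t)}$; since $\lambda_{\mathrm{max}}$ is monotone with respect to the Loewner order, replacing $\bm x^{(t)}$ by $\bm x^{(t+1)}$ (while keeping $\bm h^{(t)}$) can only decrease the left-hand side of the spectral constraint in \eqref{obj}. Because $(\bm h^{(t)},\bm x^{(t)})$ already satisfies that constraint (it was produced by Step~1), the pair $(\bm h^{(t)},\bm x^{(t+1)})$ is feasible for the Step~1 subproblem of iteration $t+1$, which is exactly \eqref{obj} restricted to $\bm x = \bm x^{(t+1)}$; the box constraints $0\le h_i^{(t)}\le 1$ are inherited trivially.

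Now $\bm h^{(t+1)}$ is, by construction, a minimizer of $\|\bm h\|_0$ over that Step~1 feasible set (a minimizer exists, since $h_i=1$ for all $i$ makes the covariance term vanish and the objective is a non-negative integer), and $\bm h^{(t)}$ belongs to that set, so $\|\bm h^{(t+1)}\|_0 \le \|\bm h^{(t)}\|_0$. This proves monotonicity. For termination, $\|\bm h^{(t)}\|_0$ is a non-negative integer bounded above by $n$, and the while-loop enters iteration $t+1$ only when $\|\bm h^{(t)}\|_0 < \|\bm h^{(t-1)}\|_0$ holds \emph{strictly}; a strictly decreasing sequence of integers confined to $\{0,1,\dots,n\}$ has at most $n+1$ terms, hence at most $n$ strict-decrease steps can occur and the algorithm halts within $n$ iterations.

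The only genuinely delicate point is the Loewner-order comparison: one must observe that the Step~2 update is not merely a reasonable re-centering but is exactly the minimizer, in the strong matrix (PSD) sense, of the weighted scatter matrix for the frozen weights $w_i$, which is what forces feasibility to persist under the mean update. Everything after that—attainment of the Step~1 minimum, the monotonicity conclusion, and the finiteness of a bounded strictly-decreasing integer sequence—is routine bookkeeping; mild care is only needed at initialization (e.g. reading the first loop pass as unconditional, or setting $\bm h^{(-1)}$ to the all-ones vector with $\|\bm h^{(-1)}\|_0 = n$) so that the clean bound of $n$ iterations is justified.
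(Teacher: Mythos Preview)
Your argument is essentially the paper's, but it has one genuine gap. You assert that the Step~2 update $\bm x^{(t+1)}$ coincides with the weighted mean $\bar{\bm y}=\bigl(\sum_i w_i\bm y_i\bigr)\big/\bigl(\sum_i w_i\bigr)$ for $w_i=1-h_i^{(t)}$. But Step~2 computes the \emph{unweighted} average over $\{i:h_i^{(t)}=0\}$, and these two quantities agree only when every $h_i^{(t)}\in\{0,1\}$. The Step~1 subproblem allows $h_i\in[0,1]$, and while a binary optimizer always exists (pushing any nonzero $h_i$ to $1$ only shrinks the PSD sum and leaves $\|\bm h\|_0$ unchanged), Algorithm~1 as written does not force that choice. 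With a fractional $h_i^{(t)}$, your parallel-axis identity no longer certifies that the weighted scatter at $\bm x^{(t+1)}$ is Loewner-below the scatter at $\bm x^{(t)}$, so feasibility of $(\bm h^{(t)},\bm x^{(t+1)})$ for the next Step~1 problem is unproven.

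The paper closes exactly this gap by first passing to the rounded vector $\bm s^{(t)}$ with $s_i^{(t)}=\mathbbm{1}\{h_i^{(t)}\neq 0\}$: then $\|\bm s^{(t)}\|_0=\|\bm h^{(t)}\|_0$, the spectral constraint only tightens at $\bm x^{(t)}$ (since $1-s_i^{(t)}\le 1-h_i^{(t)}$ and each outer product is PSD), and now $\bm x^{(t+1)}$ \emph{is} the $\bm s^{(t)}$-weighted mean, so your parallel-axis argument applies verbatim to $\bm s^{(t)}$ and yields feasibility of $(\bm s^{(t)},\bm x^{(t+1)})$. Inserting this one-line rounding step makes your proof complete and identical in substance to the paper's.
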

\textit{Proof}:
In the $t$-th iteration of Algorithm 1, let $\bm h^{(t)}$ be the solution of Step 1. We introduce a new variable $\bm s^{(t)}$ defined as follows:
\[
s_i^{(t)}=
	\begin{cases}
			0 & \text{if } h_i^{(t)}=0,\\
			1 & \text{otherwise.}
	\end{cases}
\]
Note that we have
\begin{align}\label{obj_t}
\|\bm s^{(t)}\|_0=\|\bm h^{(t)}\|_0. 
\end{align}
Observe that $(\bm y_i-\bm x^{(t)})(\bm y_i-\bm x^{(t)})^\top$ is a PSD matrix for any $i$, so we have
\begin{align}\label{lambda}
\lambda_{\mathrm{max}}(\sum_{i=1}^{n} (1-s_i^{(t)})(\bm y_i-\bm x^{(t)})(\bm y_i-\bm x^{(t)})^\top) \leq \lambda_{\mathrm{max}}(\sum_{i=1}^{n} (1-h_i^{(t)})(\bm y_i-\bm x^{(t)})(\bm y_i-\bm x^{(t)})^\top). 
\end{align}
Since the index sets satisfy $\{i:s_i^{(t)}=0\}=\{i:h_i^{(t)}=0\}$, we get that the output in Step 2 satisfies, 
\[
\bm x^{(t+1)}=\frac{\sum_{\{i:h_i^{(t)}=0\}} \bm y_i}{|\{i:h_i^{(t)}=0\}|}=\frac{\sum_{\{i:s_i^{(t)}=0\}} \bm y_i}{|\{i:s_i^{(t)}=0\}|}. 
\]
As all the nonzero values of $\bm s^{(t)}$ are 1, it follows that $x^{(t+1)}$ is the optimum solution of $\min_{\bm x}\ \lambda_{\mathrm{max}}(\sum_{i=1}^{n} (1-s_i^{(t)})(\bm y_i-\bm x)(\bm y_i-\bm x)^\top)$. Thus, we must have 
\[
\lambda_{\mathrm{max}}(\sum_{i=1}^{n} (1-s_i^{(t)})(\bm y_i-\bm x^{(t+1)})(\bm y_i-\bm x^{(t+1)})^\top) \leq \lambda_{\mathrm{max}}(\sum_{i=1}^{n} (1-s_i^{(t)})(\bm y_i-\bm x^{(t)})(\bm y_i-\bm x^{(t)})^\top). 
\]
Applying \eqref{lambda}, we obtain
\[
\lambda_{\mathrm{max}}(\sum_{i=1}^{n} (1-s_i^{(t)})(\bm y_i-\bm x^{(t+1)})(\bm y_i-\bm x^{(t+1)})^\top) \leq \lambda_{\mathrm{max}}(\sum_{i=1}^{n} (1-h_i^{(t)})(\bm y_i-\bm x^{(t)})(\bm y_i-\bm x^{(t)})^\top).
\]
Since $\{\bm x^{(t)}, \bm h^{(t)} \}$ is a feasible solution of \eqref{obj}, we get that $\{\bm x^{(t+1)}, \bm s^{(t)} \}$ must also be a feasible solution. Since $\bm h^{(t+1)}$ is the optimal solution of Step 1 in iteration $t+1$ with $\bm x$ fixed as $\bm x^{(t+1)}$, we obtain $\|\bm h^{(t+1)}\|_0 \leq \|\bm s^{(t)}\|_0$. Consequently, it follows from \eqref{obj_t} that $\|\bm h^{(t+1)}\|_0 \leq \|\bm h^{(t)}\|_0$. Because $\|\bm h^{(0)}\|_0 \leq n$ and the objective value in \eqref{obj} is always non-negative, Algorithm 1 will terminate in at most $n$ iterations. 
\hfill\qedsymbol

\begin{remark}\label{rem:2}
Note that Theorem 2 does not establish convergence to the global optimal solution of~\eqref{obj}. It only guarantees the decreasing of the objective value. However, Theorem 1 states that with probability at least 0.95, any feasible solution $(\hat{\bm h}, \hat{\bm x})$ with ${\hat{h}_i} \in \{0,1\}$ and $\|\hat{\bm h}\|_0 \leq (\alpha+\epsilon)n$ satisfies $\| \hat{\bm x}-{ \Bar{\bm x}^*}\|_2 \leq (4+3c_1)\sigma \sqrt{\alpha+\epsilon}$. It is not necessary to reach the global optimum of the objective~\eqref{obj}.
\end{remark}

\subsection{Solving Step 1 of Algorithm 1}
\label{headings}

The $\ell_0$ minimization problem in Step 1 of Algorithm 1 is computationally challenging in general. Motivated by the success of the $\ell_1$ and $\ell_p \ (0<p<1)$ techniques in Compressive Sensing, we use $\|\bm h\|_p^p$ (with $0<p\leq 1$) as surrogate functions for $\|\bm h\|_0$ in the minimization. When $p=1$, the problem is convex, and can be reformulated as the following packing SDP with $w_i\triangleq 1-h_i$, and $e_i$ being the $i$-th standard basis vector in $\mathbb{R}^n$. The details can be found in the Appendix:
\begin{align}\label{packing SDP}
   \max_{\bm w} \ \bm 1^\top \bm w \quad 
    s.t. &\  w_i \geq 0, \forall i\\
   & \sum_{i=1}^{n} w_i \begin{bmatrix}
    e_ie_i^\top &  \\
    &  (\bm y_i-\bm x)(\bm y_i-\bm x)^\top \end{bmatrix} \preceq \begin{bmatrix}
    I_{n\times n} &  \\
    &  c^2_1 n \sigma^2 I_{d\times d} \end{bmatrix} \nonumber
\end{align}

When $0<p<1$, the surrogate function $\|\bm h\|_p^p = \sum_i h_i^p$ is concave. We can iteratively construct and minimize a \textit{tight} upper bound on this surrogate function via iterative re-weighted $\ell_2$~\cite{4518498,gorodnitsky1997sparse} or iterative re-weighted $\ell_1$ techniques~\cite{candes2008enhancing} from Compressive Sensing.\footnote{We observe that iterative re-weighted $\ell_2$ achieves better empirical performance.}

\textbf{Numerical Example.}
We illustrate the effectiveness of $\ell_1$ and $\ell_p \ (0<p<1)$ in approximating $\ell_0$ through the following numerical example. The dimension of the data is $d=100$, and for visualization purposes, we set the number of data points to be $n=200$. The outlier fraction is set to be 10\%. The inlier data points are randomly generated from the standard Gaussian distribution with zero mean. For the outliers, half of them (i.e., 5\%) are set to be $[\sqrt{d/2},\sqrt{d/2},0,...,0]$, and the other half are set as $[\sqrt{d/2},-\sqrt{d/2},0,...,0]$, so that their $\ell_2$ distances to the true mean $[0,...,0]$ are all $\sqrt{d}$, similar to that of the inlier points. We fix $\bm x$ to be coordinate-wise median of the data points, and then solve $\bm h$ via $\ell_1$ minimization or $\ell_p$ minimization with $p=0.5$. Fig.~\ref{fig:LpApproximation} shows an example solution of $\bm h$ by $\ell_1$ method (left, blue dots) and $\ell_p$ method (right, green dots). The red circles in the figure correspond to the true indices of the outlier points. First, we can see that both the $\ell_1$ and $\ell_p$ minimization lead to sparse solutions of $\bm h$, and the solution of $\ell_p$ minimization is even sparser. Further, from their solutions, we can very effectively identify the set of inlier points (corresponding to $h_i=0$). This enables us to accurately estimate $\bm x$ in Step 2 of Algorithm 1.

\begin{figure*}[!h]
	\centering
	\includegraphics[trim=50 0 50 0, clip=true,width=0.8\linewidth]{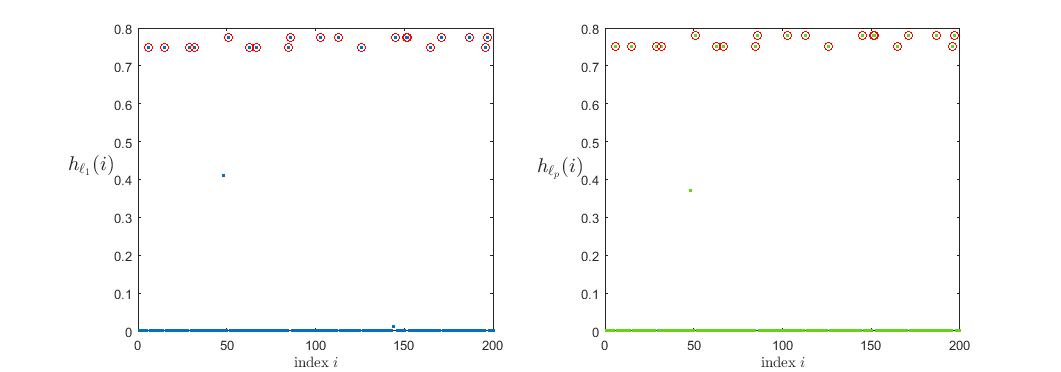}
	\caption{The solution of $\bm h$ by $\ell_1$ method (left, blue dots) and $\ell_p$ method ($p=0.5$, right, green dots). The red circles correspond to the true indices of the outlier points.}
	\label{fig:LpApproximation}
\end{figure*}

\subsection{Complexity analysis}

Theorem~\ref{converge} guarantees that the total number of iterations performed by Algorithm 1 is upper bounded by $n$. We note that in practice Algorithm 1 often terminates in much fewer than $n$ iterations. In each iteration, the computational complexity of Step 2 is $O (nd)$. In Step 1, if we relax $\ell_0$-`norm' to $\ell_1$, we can solve the resulting Packing SDP ~\eqref{packing SDP} to precision $1-O(\epsilon)$ in $\tilde{O}(nd/\epsilon^6)$ parallelizable work using positive SDP solvers~\cite{Allen-Zhu2016} (the notation $\tilde{O}(m)$ hides the poly-log factors: $\tilde{O}(m)=O(m\text{polylog}(m))$).

If we relax the $\ell_0$-`norm' to $\ell_p$ with $0<p<1$ in Step 1, we iteratively construct and minimize a \textit{tight} upper bound on the $\ell_p$ objective via iterative re-weighted $\ell_2$~\cite{4518498,gorodnitsky1997sparse} or iterative re-weighted $\ell_1$ techniques~\cite{candes2008enhancing}\footnote{We only run a few re-weighted iterations in our implementation.}. Minimizing the resulting weighted $\ell_1$ objective can be also solved very efficiently to precision $1-O(\epsilon)$ by formulating it as a Packing SDP (see 
Appendix) with computational complexity $\tilde{O}(nd/\epsilon^6)$~\cite{Allen-Zhu2016}. If using iterative re-weighted $\ell_2$, minimizing the resulting weighted $\ell_2$ objective is \RED{a SDP constrained least squares problem, whose computational complexity is in general polynomial in both $d$ and $n$.} We will explore more  efficient solutions to this objective in future work.

\section{Empirical Studies}
\label{others}
In this section, we present empirical results on the performance of the proposed methods and compare with the following state-of-the-art high dimension robust mean estimation methods: Iterative Filtering~\cite{diakonikolas2017being}, Quantum Entropy Scoring (QUE)~\cite{NIPS2019_8839}, which scores the outliers based on multiple directions. Note that the above methods as well as the proposed methods need to specify the upper bound on the spectral norm of the sample covariance matrix of the inlier points. We use the source codes from the authors and tune the parameters carefully. Throughout the experiments, we fix $p=0.5$ for the proposed $\ell_p$ method. We also test the method proposed by Lai et al.~\cite{LRV} (denoted as LRV), which needs the knowledge of outlier fractions. We additionally compare with a recently proposed method by Cheng et al.~\cite{convex} (denoted as CDG), which needs the knowledge of \textit{both} outlier fraction and the upper bound on the spectral norm of inlier covariance matrix. The true values are provided to this method. For evaluation, we define the recovery error as the $\ell_2$ distance of the estimated mean to the oracle solution, i.e., the average of the uncorrupted samples.

\subsection{Synthetic data}
We conducted experiments on two different settings of outliers, descirbed below, where the dimension of the data is $d$, and the number of data points is $n$:

\textbf{Setting A}: In this setting, there is one cluster of outliers where their $\ell_2$ distances to the true mean $\bm x$ are similar to that of the inlier points. Besides that, there are also some other outliers that have different distances to the true mean. More specifically, the inlier points are randomly generated from $\mathcal{N}(0,\, \bm I_{d\times d})$. Half of the outliers are generated from $|\mathcal{N}(0,\, \bm I_{d\times d})|$, where the absolute sign applies on all the $d$ entries. The other half of the outliers are generated by adding corruptions to each entry of the points generated from $\mathcal{N}(0,\, \bm I_{d\times d})$, where the values of the corruptions are randomly drawn from $U(0,3)$. We vary the total fraction $\alpha$ of the outliers and report the average recovery error of each method over 20 trials in Table~\ref{settingA_400} with $d=400, n=800$, and in Table~\ref{settingA_100} with $d=100, n=2000$.
\begin{table}[!h]\centering
	\caption{Recovery error of each method under different fraction $\alpha$ of the outlier points (Setting A, $d=400, n=800$)}
	\label{settingA_400}
	\begin{tabular}{|c|c|c|c|c|c|c|c|}
		
		\hline
		$\alpha$	   & Iter Filter & QUE & LRV & CDG & $\ell_1$ & $\ell_p$ \\
		\hline
		10\% & 0.2906 & 1.030 & 0.4859 & 0.0676 & \textbf{0.0185} & \textbf{0.0052} \\
		\hline
		20\% & 0.3532 & 1.162 & 0.7698 & 0.0878 & \textbf{0.0277} & \textbf{0.0148} \\
		\hline
		30\% & 0.4716 & 1.285 & 1.164 & 0.1123 & \textbf{0.0282} & \textbf{0.0188}  \\
		\hline
	\end{tabular}
\end{table}
\begin{table}[!h]\centering
	\caption{Recovery error of each method under different fraction $\alpha$ of the outlier points (Setting A, $d=100, n=2000$)}
	\label{settingA_100}
	\begin{tabular}{|c|c|c|c|c|c|c|c|}
		
		\hline
		$\alpha$	   & Iter Filter & QUE & LRV & CDG & $\ell_1$ & $\ell_p$ \\
		\hline
		10\% & 0.1058 & 0.4451 & 0.1651 & 0.0356 & \textbf{0.0302} & \textbf{0.0287} \\
		\hline
		20\% & 0.1615 & 0.5046 & 0.2664 & 0.0488 & \textbf{0.0358} & \textbf{0.0335} \\
		\hline
		30\% & 0.2731 & 1.138 & 0.3596 & 0.0613 & \textbf{0.0429} & \textbf{0.0422}  \\
		\hline
	\end{tabular}
\end{table}

It can be seen that the proposed $\ell_1$ and $\ell_p$ methods demonstrate much better recovery performance than the state-of-the-art approaches.

\textbf{Setting B}: In this setting, there are two clusters of outliers, and their $\ell_2$ distances to the true mean $\bm x$ are similar to that of the inlier points. The inlier data points are randomly generated from the standard Gaussian distribution with zero mean. For the outliers, half of them are set to be $[\sqrt{d/2},\sqrt{d/2},0,...,0]$, and the other half are set as $[\sqrt{d/2},-\sqrt{d/2},0,...,0]$, so that their $\ell_2$ distances to the true mean $[0,...,0]$ are all $\sqrt{d}$, similar to that of the inlier points. We vary the total fraction $\alpha$ of the outliers and report the average recovery error of each method over 20 trials in Table~\ref{settingB_100} with $d=100, n=2000$. The proposed $\ell_1$ and $\ell_p$ methods show significant improvements over the competing methods, and $\ell_p$ performs the best.

\begin{table}[!h]\centering
	\caption{Recovery error of each method under different fraction $\alpha$ of the outlier points (Setting B, $d=100, n=2000$)}
	\label{settingB_100}
	\begin{tabular}{|c|c|c|c|c|c|c|c|}
		
		\hline
		$\alpha$	   & Iter Filter & QUE & LRV & CDG & $\ell_1$ & $\ell_p$ \\
		\hline
		10\% & 0.0865 & 0.7728 & 0.2448 & 0.0329 & \textbf{0.0123} & \textbf{0.069} \\
		\hline
		20\% & 0.0892 & 0.4971 & 0.4962 & 0.0437 & \textbf{0.0127 }& \textbf{0.0092} \\
		\hline
		30\% & 0.0912 & 0.5076 & 0.8708 & 0.1691 & \textbf{0.0173} &\textbf{ 0.0132 } \\
		\hline
	\end{tabular}
\end{table}

Lastly, we tested the performance of each method w.r.t. different number of samples. The dimension of the data is fixed to be 100. The fraction of the corrupted points is fixed to be 20\%, and the data points are generated as per Setting B. We vary the number of samples from 100 to 5000, and report the recovery error of each method in Table~\ref{settingB_100_02}. We can see that the performance of each method gets better when the number of samples is increased. Again, the proposed methods perform the best under various number of samples.
\begin{table}[!h]\centering
	\caption{Recovery error of each method w.r.t. different number of samples (Setting B, $d=100, \alpha=0.2$)}
	\label{settingB_100_02}
	\begin{tabular}{|c|c|c|c|c|c|c|c|}
		
		\hline
		\# samples	   & Iter Filter & QUE & LRV & CDG & $\ell_1$ & $\ell_p$ \\
		\hline
		100 & 0.4817 & 1.684 & 1.408 & 0.3078 & \textbf{0.2253} & \textbf{0.2100} \\
		\hline
		200 & 0.3246 & 1.175 & 1.077 & 0.1824 & \textbf{0.1124} & \textbf{0.0815} \\
		\hline
		500 & 0.1846 & 0.8254 & 0.7506 & 0.1437 & \textbf{0.0482} & \textbf{0.0337}\\
		\hline
		1000 & 0.1307 & 0.6366 & 0.6244 & 0.1051 & \textbf{0.0255} & \textbf{0.0180 } \\
		\hline
		2000 & 0.0892 & 0.4971 & 0.4962 & 0.0437 & \textbf{0.0127 }& \textbf{0.0092} \\
		\hline
		5000 & 0.0598 & 0.4076 & 0.3892 & 0.0170 & \textbf{0.0064} & \textbf{0.0051}  \\
		\hline
	\end{tabular}
\end{table}

\subsection{Real data}

In this section, we use real face images to test the effectiveness of the robust mean estimation methods. The average face of particular regions or certain groups of people is useful for many social and psychological studies~\cite{little2011facial}. Here we use 140 face images from Brazilian face database\footnote{https://fei.edu.br/~cet/facedatabase.html}, where 100 of them are well-controlled frontal faces with neutral expressions, which are considered to be inliers. The rest of 40 images either have large poses of the head, or have smiling expressions and upside down, which are considered to be outliers. The size of the face images is 36 $\times$ 30, so the dimension of each data point is 1080. The oracle solution is the average of the inlier faces. Table~\ref{face} reports the recovery error, which is the $\ell_2$ distance of the estimated mean face to the oracle solution, by each method. The proposed methods achieve much smaller recovery error than the state-of-the-art methods. The sample face images and the reconstructed mean face images by each method can be found in the Appendix. 

\begin{table}[!h]\centering
	\caption{Recovery error of the mean face by each method}
	\label{face}
	\begin{tabular}{|c|c|c|c|c|c|c|c|}
		
		\hline
		Mean &\shortstack{coordinate-wise \\ median} & Iter Filter & QUE & LRV & $\ell_1$ & $\ell_p$ \\
		\hline
		667 & 250 & 228 & 234 & 439 & \textbf{73} & \textbf{18} \\
		\hline

	\end{tabular}
\end{table}

\section{Conclusion} 


We formulated the robust mean estimation as the minimization of  the  $\ell_0$-`norm' of the introduced \emph{outlier indicator vector}, under second moment constraints on the inlier points. We replaced $\ell_0$ by $\ell_p$ $(0<p\leq 1)$ to provide computationally tractable solutions as in CS, and showed that these solutions significantly outperform state-of-the-art robust mean estimation methods. We observed strong numerical evidence that $\ell_p$ $(0<p\leq 1)$ leads to sparse solutions; theoretically justifying this phenomenon is ongoing work. Along these lines, two recent works~\cite{cheng2020high,zhu2020robust} show that any approximate stationary point of the objective in~\cite{convex} gives a near-optimal solution. It is of interest to see if a similar property can be shown for the proposed $\ell_0$ and  $\ell_p$ $(0<p\leq 1)$ objectives.
\bibliographystyle{IEEEtran}

\bibliography{main}
\newpage
\section{Appendix}

\subsection{Solving $\ell_1$ objective via Packing SDP}
\begin{align}\label{obj1}
   \min_{ \bm h} & \|\bm h\|_1 \\
    s.t. \ & 0\leq h_i \leq 1, \forall i, \nonumber\\
   & \lambda_{\mathrm{max}} \left(\sum_{i=1}^{n} (1-h_i)(\bm y_i-\bm x)(\bm y_i-\bm x)^\top\right)\leq c n \sigma^2. \nonumber
\end{align}

Define the vector $\bm w$ with $w_i\triangleq 1-h_i$. Since $0\leq h_i \leq 1$, we have $0\leq w_i \leq 1$. Further, $\| \bm h \|_1= \sum_{i=1}^{n} h_i=\sum_{i=1}^{n} (1-w_i)=n- \sum_{i=1}^{n} w_i =n- \bm 1^\top \bm w$. Therefore, solving \eqref{obj1} is equivalent to solving the following:
\begin{align}\label{obj1w}
   \max_{\bm w} & \ \bm 1^\top \bm w \\
    s.t. \ & 0\leq w_i \leq 1, \forall i, \nonumber\\
   & \lambda_{\mathrm{max}}\left(\sum_{i=1}^{n} w_i(\bm y_i-\bm x)(\bm y_i-\bm x)^\top\right)\leq c n \sigma^2. \nonumber
\end{align}

Then, we rewrite the constraints $0 \leq w_i \leq 1, \forall i$ as $0 \leq w_i$, and $\sum w_i e_ie_i^\top \preceq I_{n\times n}$, where $e_i$ is the $i$-th standard basis vector in $\mathbb{R}^n$. 
This establishes the equivalence between \eqref{obj1w} and \eqref{packing SDP}.

\subsection{Minimizing $\ell_p$ via iterative re-weighted $\ell_2$}
Consider the relaxation of $\ell_0$ to $\ell_p$ ($0<p<1$) in Step 1 of Algorithm 1. We have the following objective:
\begin{align}\label{objp}
   \min_{ \bm h} & \|\bm h\|_p^p \\
    s.t. \ & 0\leq h_i \leq 1, \forall i, \nonumber\\
   & \lambda_{\mathrm{max}}\left(\sum_{i=1}^{n} (1-h_i)(\bm y_i-\bm x)(\bm y_i-\bm x)^\top\right)\leq c n \sigma^2. \nonumber
\end{align}
Note that $\|\bm h\|_p^p=\sum_{i=1}^{n} h_i^p=\sum_{i=1}^{n} (h_i^2)^{\frac{p}{2}}$. Consider that we employ the iterative re-weighted $\ell_2$ technique~\cite{4518498,gorodnitsky1997sparse}. Then at $(k+1)$-th inner iteration, we construct a tight upper bound on $\|\bm h\|_p^p$ at ${\bm h^{(k)}}^2$ as
\begin{equation}
    \sum_{i=1}^{n} \left\lbrack{\left({h_i^{(k)}}^2\right)}^{\frac{p}{2}}+\frac{p}{2}{\left({h_i^{(k)}}^2\right)}^{\frac{p}{2}-1}\left(h_i^2-{h_i^{(k)}}^2\right)\right\rbrack.
\end{equation} 
We minimize this upper bound:
\begin{align}\label{objp_t}
   \min_{ \bm h} & \sum_{i=1}^{n}{\left({h_i^{(k)}}^2\right)}^{\frac{p}{2}-1}h_i^2 \\
    s.t. \ & 0\leq h_i \leq 1, \forall i, \nonumber\\
   & \lambda_{\mathrm{max}}\left(\sum_{i=1}^{n} (1-h_i)(\bm y_i-\bm x)(\bm y_i-\bm x)^\top\right)\leq c n \sigma^2, \nonumber
\end{align}
Define $u_i={\left({h_i^{(k)}}\right)}^{\frac{p}{2}-1}$, the objective in~\eqref{objp_t} becomes $\sum_{i=1}^{n} u_i^2h_i^2$. Define the vector $\bm w$ with $w_i\triangleq 1-h_i$. Since $0\leq h_i \leq 1$, we have $0\leq w_i \leq 1$. Further, $\sum_{i=1}^{n} u_i^2 h_i^2= \sum_{i=1}^{n} u_i^2 (1-w_i)^2=\sum_{i=1}^{n} (u_i-u_iw_i)^2$. So, solving~\eqref{objp_t} is equivalent to solving the following:
\begin{align}\label{objp_t_w}
   \min_{ \bm w} & \sum_{i=1}^{n}(u_i-u_iw_i)^2 \\
    s.t. \ & 0\leq w_i \leq 1, \forall i, \nonumber\\
   & \lambda_{\mathrm{max}}(\sum_{i=1}^{n} w_i(\bm y_i-\bm x)(\bm y_i-\bm x)^\top)\leq c n \sigma^2. \nonumber
\end{align}

Further, define the vector $\bm z$ with $z_i\triangleq u_iw_i$. Then solving~\eqref{objp_t_w} is equivalent to solving the following:
\begin{align}\label{objp_t_z}
   \min_{ \bm z} & \| \bm u-\bm z\|_2^2 \\
    s.t. \ & 0\leq z_i \leq u_i, \forall i, \nonumber\\
   & \lambda_{\mathrm{max}}\left (\sum_{i=1}^{n} z_i[(\bm y_i-\bm x)(\bm y_i-\bm x)^\top/u_i]\right)\leq c n \sigma^2. \nonumber
\end{align}

Then, we rewrite the constraints $0 \leq z_i \leq u_i, \forall i$ as $0 \leq z_i$, and $\sum_{i=1}^{n} z_i e_ie_i^\top \preceq \diag (\bm u)$, where $e_i$ is the $i$-th standard basis vector in $\mathbb{R}^n$. Finally, we can turn~\eqref{objp_t_z} into the following least squares problem with semidefinite cone constraints:
\begin{align}\label{objp_t_z_SDP}
   \min_{ \bm z} & \| \bm u-\bm z\|_2^2 \\
    s.t. \ & \ z_i \geq 0,  \forall i, \nonumber\\
   & \sum_{i=1}^{n} z_i \begin{bmatrix}
    e_ie_i^\top &  \\
    &  (\bm y_i-\bm x)(\bm y_i-\bm x)^\top /u_i \end{bmatrix} \preceq \begin{bmatrix}
    \diag (\bm u)&  \\
    &  c n \sigma^2 I_{d\times d} \end{bmatrix} .\nonumber
\end{align}



\subsection{Solving weighted $\ell_1$ objective via Packing SDP}
Consider the relaxation of $\ell_0$ to $\ell_p$ $(0<p<1)$ in Step 1 of Algorithm 1 (i.e., minimizing $\|\bm h\|_p^p$). If we employ iterative  re-weighted $\ell_1$ approach~\cite{candes2008enhancing,4518498}, we need to solve the following problem:
\begin{align}\label{objp_wL1}
   \min_{ \bm h} & \sum_{i=1}^{n} u_i h_i\\
    s.t. \ & 0\leq h_i \leq 1, \forall i, \nonumber\\
   & \lambda_{\mathrm{max}}\left(\sum_{i=1}^{n} (1-h_i)(\bm y_i-\bm x)(\bm y_i-\bm x)^\top\right)\leq c n \sigma^2, \nonumber
\end{align}

where $u_i$ is the weight on corresponding $h_i$. Define the vector $\bm w$ with $w_i\triangleq 1-h_i$. Since $0\leq h_i \leq 1$, we have $0\leq w_i \leq 1$. Further, $\sum_{i=1}^{n} u_i h_i= \sum_{i=1}^{n} u_i (1-w_i)=\sum_{i=1}^{n} u_i- \sum_{i=1}^{n} u_i w_i$. So, solving \eqref{objp_wL1} is equivalent to solving the following:
\begin{align}\label{obj1wp}
   \max_{\bm w} & \ \bm u^\top \bm w \\
    s.t. \ & 0\leq w_i \leq 1, \forall i, \nonumber\\
   & \lambda_{\mathrm{max}}\left(\sum_{i=1}^{n} w_i(\bm y_i-\bm x)(\bm y_i-\bm x)^\top\right)\leq c n \sigma^2. \nonumber
\end{align}

Then, we rewrite the constraints $0 \leq w_i \leq 1, \forall i$ as $0 \leq w_i$, and $\sum w_i e_ie_i^\top \preceq I_{n\times n}$, where $e_i$ is the $i$-th standard basis vector in $\mathbb{R}^n$. Finally, we can turn it into the following Packing SDP:
\begin{align}\label{packing SDPw}
   \max_{\bm w} & \ \bm u^\top \bm w \\
    s.t. \ & w_i \geq 0, \forall i, \nonumber\\
   & \sum_{i=1}^{n} w_i \begin{bmatrix}
    e_ie_i^\top &  \\
    &  (\bm y_i-\bm x)(\bm y_i-\bm x)^\top \end{bmatrix} \preceq \begin{bmatrix}
    I_{n\times n} &  \\
    &  c n \sigma^2 I_{d\times d} \end{bmatrix} .\nonumber
\end{align}

\subsection{Lemma 2}\label{sec_lemma2}
\begin{lemma}\label{lem:conc} Let $0<\alpha<\frac{1}{2}$ and $n\geq 9\times 10^3 \frac{d}{\alpha}$.
Let $\tilde{\bm y}_1,\dots,\tilde{\bm y}_n$ be i.i.d. samples drawn from a distribution with mean $\bm x^*$ and covariance matrix $\Sigma \preccurlyeq \sigma^2 I$.
Let $\bm G=\lbrace \tilde{\bm y}_i: \lVert \tilde{y}_i-\bm x^* \rVert_2\leq 80\sigma\sqrt{d/\alpha} \rbrace$. Let $\tilde{\bm x}$ be the mean of samples in $\bm G$. Then the following holds with probability at least 0.974:
\begin{equation}
    \lVert \tilde{\bm x}-\bm x^* \rVert_2 \leq \sigma\sqrt{\alpha}.
\end{equation}
\end{lemma}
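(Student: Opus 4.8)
The plan is to split the deviation $\tilde{\bm x}-\bm x^*$ into the contribution of the ``far away'' samples that get discarded and the contribution of the samples that survive the truncation. Write $m=|\bm G|$, $\bm B = \{\tilde{\bm y}_i : \lVert \tilde{\bm y}_i-\bm x^*\rVert_2 > 80\sigma\sqrt{d/\alpha}\}$, so $\bm B$ is the complement of $\bm G$. Then
$\tilde{\bm x}-\bm x^* = \frac{1}{m}\sum_{\tilde{\bm y}_i\in\bm G}(\tilde{\bm y}_i-\bm x^*)
= \frac{n}{m}\Big(\frac{1}{n}\sum_{i=1}^n(\tilde{\bm y}_i-\bm x^*)\Big) - \frac{1}{m}\sum_{\tilde{\bm y}_i\in\bm B}(\tilde{\bm y}_i-\bm x^*).$
So I would bound three things: (i) the number $|\bm B|$ of discarded points, (ii) the full-sample mean deviation $\big\lVert \frac{1}{n}\sum_i(\tilde{\bm y}_i-\bm x^*)\big\rVert_2$, and (iii) the mass $\big\lVert \sum_{\tilde{\bm y}_i\in\bm B}(\tilde{\bm y}_i-\bm x^*)\big\rVert_2$ carried by the discarded points. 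Each of these is a standard second-moment concentration estimate under the covariance bound $\Sigma\preccurlyeq\sigma^2 I$, and the sample size $n\ge 9\times 10^3\, d/\alpha$ is exactly what makes the constants work out; the $0.974$ probability will come from a union bound over the failure events of the three estimates (this is presumably why the excerpt later refers to ``the events in \eqref{eq:yx}, \eqref{eq:z} and \eqref{eq:G}'').

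For step (i), by Markov's inequality applied to $\mathbb{E}\lVert \tilde{\bm y}_i-\bm x^*\rVert_2^2 = \Tr(\Sigma)\le d\sigma^2$, each sample lands in $\bm B$ with probability at most $\frac{d\sigma^2}{(80\sigma\sqrt{d/\alpha})^2} = \frac{\alpha}{6400}$; then a Chernoff/Bernstein bound on the binomial $|\bm B|$ gives $|\bm B|\le \frac{\alpha}{3000}n$ (say) with high probability, hence $m\ge(1-\alpha/3000)n$ and $n/m\le 1+\alpha/2000$. For step (ii), $\mathbb{E}\big\lVert \frac{1}{n}\sum_i(\tilde{\bm y}_i-\bm x^*)\big\rVert_2^2 = \frac{1}{n}\Tr(\Sigma)\le \frac{d\sigma^2}{n}\le \frac{\alpha\sigma^2}{9\times10^3}$, so by Markov this norm is at most, say, $\frac{\sigma\sqrt\alpha}{30}$ except with probability $\le 0.01$. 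For step (iii), I would use that $\big\lVert\sum_{\tilde{\bm y}_i\in\bm B}(\tilde{\bm y}_i-\bm x^*)\big\rVert_2 \le \max_j\lVert\tilde{\bm y}_j-\bm x^*\rVert_2 \cdot$ (number of points whose deviation exceeds a dyadic threshold), summed over a dyadic decomposition of the radii $2^k\cdot 80\sigma\sqrt{d/\alpha}$: the expected number of points beyond radius $r$ is $\le n\,d\sigma^2/r^2$, so the total expected mass is $\sum_k 2^{k+1}\cdot 80\sigma\sqrt{d/\alpha}\cdot n\frac{d\sigma^2}{(2^k\cdot 80\sigma\sqrt{d/\alpha})^2} = \frac{n\sqrt d \,\sigma\sqrt\alpha}{80}\sum_k 2^{-k} = \frac{n\sqrt d\,\sigma\sqrt\alpha}{40}$, which after dividing by $m\approx n$ is $O(\sigma\sqrt{\alpha d}/n)\cdot$\,---\,wait, this needs $\sqrt d$ absorbed; here I would instead bound $\frac{1}{m}\big\lVert\sum_{\bm B}(\tilde{\bm y}_i-\bm x^*)\big\rVert_2$ directly via $\le\frac{1}{m}\sum_{\bm B}\lVert\tilde{\bm y}_i-\bm x^*\rVert_2$ and the same dyadic tail sum, getting $\le \frac{n}{m}\cdot\frac{\sqrt d\,\sigma\sqrt\alpha}{40} $, and then use $n\ge 9\times10^3 d/\alpha$ once more to convert the stray $\sqrt d$ — actually the cleanest route is to keep everything in terms of the truncated second moment $\frac{1}{m}\sum_{\bm G}\lVert\tilde{\bm y}_i-\bm x^*\rVert_2^2$ and apply Cauchy–Schwarz on $\bm B$; I would pick whichever bookkeeping keeps all constants below the $\sigma\sqrt\alpha$ budget.

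Assembling by the triangle inequality: $\lVert\tilde{\bm x}-\bm x^*\rVert_2 \le (1+\alpha/2000)\cdot\frac{\sigma\sqrt\alpha}{30} + \frac{1}{m}\big\lVert\sum_{\bm B}(\tilde{\bm y}_i-\bm x^*)\big\rVert_2$, and with the constants chosen generously in steps (i)–(iii) each term is a small fraction of $\sigma\sqrt\alpha$, summing to at most $\sigma\sqrt\alpha$; a union bound over the three (or four, counting the $|\bm G|$ size event) failure probabilities gives overall success probability at least $0.974$. The main obstacle is step (iii): naively the discarded points could each sit essentially at the truncation radius and collectively contribute $\Theta(|\bm B|\cdot\sigma\sqrt{d/\alpha}) $, which with $|\bm B|\approx \alpha n/6400$ and $n\approx d/\alpha$ is only borderline; the dyadic (layer-cake) decomposition of the tail is what tames it, and I would need to check that the geometric series of constants genuinely closes under $n\ge 9\times10^3 d/\alpha$ rather than merely up to logarithmic factors. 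Everything else is routine Markov/Chernoff bookkeeping.
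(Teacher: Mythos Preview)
Your decomposition and steps (i) and (ii) are fine and essentially match the paper. The gap is in step (iii), and you already sense it: the dyadic/layer-cake bound on $\frac{1}{m}\sum_{\bm B}\lVert \tilde{\bm y}_i-\bm x^*\rVert_2$ really does leave an unabsorbable $\sqrt d$. Concretely, the tail probability at the truncation radius is only $\alpha/6400$ (not $\alpha/d$), and each discarded point sits at distance $\Theta(\sigma\sqrt{d/\alpha})$, so the expected $\ell_1$ mass is $\Theta(n\cdot\frac{\alpha}{6400}\cdot\sigma\sqrt{d/\alpha})=\Theta(n\sigma\sqrt{d\alpha})$; after dividing by $m\approx n$ you get $\Theta(\sigma\sqrt{d\alpha})$, and the sample-size assumption $n\ge 9\times10^3 d/\alpha$ has already been spent in step (ii) and cannot rescue you here. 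Passing to the $\ell_1$ mass throws away all directional cancellation, and under a bare second-moment assumption that cancellation is exactly what you need.

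The fix the paper uses is to \emph{center} the discarded-points sum. Set $\bm z_i=(\tilde{\bm y}_i-\bm x^*)\mathbbm{1}\{\lVert\tilde{\bm y}_i-\bm x^*\rVert_2>80\sigma\sqrt{d/\alpha}\}$ and write
\[
\frac{1}{n}\sum_{i}\bm z_i \;=\; \Big(\frac{1}{n}\sum_i \bm z_i - \E[\bm z]\Big)\;+\;\E[\bm z].
\]
The centered piece has $\E\big\lVert\frac{1}{n}\sum_i(\bm z_i-\E[\bm z])\big\rVert_2^2=\frac{1}{n}\Tr(\mathrm{Cov}(\bm z))\le \frac{1}{n}\Tr(\Sigma)\le \frac{d\sigma^2}{n}$, so Markov bounds it by $0.49\,\sigma\sqrt\alpha$ with failure probability $\le \frac{9d}{2n\alpha}$, exactly as in your step (ii). The mean piece is handled deterministically by Cauchy--Schwarz in the \emph{spectral} direction:
\[
\lVert \E[\bm z]\rVert_2=\max_{\lVert v\rVert_2=1}\E\big[v^\top(\tilde{\bm y}-\bm x^*)\mathbbm{1}\{\cdot\}\big]\le \sqrt{\lambda_{\max}(\Sigma)\cdot P(\lVert\tilde{\bm y}-\bm x^*\rVert_2>80\sigma\sqrt{d/\alpha})}\le \sqrt{\sigma^2\cdot\tfrac{\alpha}{6400}}=\tfrac{\sigma\sqrt\alpha}{80}.
\]
This is the step that removes the dimension: it uses $\lambda_{\max}(\Sigma)\le\sigma^2$ rather than $\Tr(\Sigma)\le d\sigma^2$. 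With these two bounds plus your step (ii) and the Markov bound $|\bm G|\ge(1-\alpha/160)n$ (probability $\ge 39/40$), a union bound gives total failure probability at most $\tfrac{1}{40}+\tfrac{9d}{n\alpha}\le 0.026$, i.e.\ success $\ge 0.974$, and the three norm pieces sum to at most $(0.49+0.49+\tfrac{1}{80})\sigma\sqrt\alpha<\sigma\sqrt\alpha$ after accounting for $n/|\bm G|\le(1-\alpha/160)^{-1}$. Your hand-wave toward ``Cauchy--Schwarz on $\bm B$'' was pointing in the right direction; the missing ingredient is that it must be applied to $\E[\bm z]$, not to the random sum itself.
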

\begin{proof}
Note that 

\begin{align}
    &\left\lVert \frac{|\bm G|}{n}(\tilde{\bm x} - \bm x^*) \right\rVert_2\\
    =& \left\lVert \frac{1}{n}\sum\limits_{i=1}^n \tilde{\bm y}_i-\bm x^*- \frac{1}{n} \sum\limits_{i=1}^n (\tilde{\bm y}_i - \bm x^*)\mathbbm{1}\lbrace \lVert \tilde{\bm y}_i-\bm x^* \rVert_2> 80\sigma\sqrt{d/\alpha} \rbrace \right\rVert_2\\
    \leq  &\left\lVert \frac{1}{n}\sum\limits_{i=1}^n \tilde{\bm y}_i-\bm x^* \right\rVert_2+\left\lVert \frac{1}{n} \sum\limits_{i=1}^n \bm z_i \right\rVert_2\\
    \leq & \left\lVert \frac{1}{n}\sum\limits_{i=1}^n \tilde{\bm y}_i-\bm x^* \right\rVert_2+\left\lVert \frac{1}{n}\sum\limits_{i=1}^n \bm z_i-E[\bm z] \right\rVert_2 + \left\lVert E[\bm z] \right\rVert_2,\label{eq:start}
\end{align}
where $\bm z_i=(\tilde{\bm y}_i - \bm x^*)\mathbbm{1}\lbrace \lVert \tilde{\bm y}_i-\bm x^* \rVert_2> 80\sigma\sqrt{d/\alpha} \rbrace$. By Markov's inequality, we get
 \begin{align}
     &\left\lVert \frac{1}{n}\sum\limits_{i=1}^n \tilde{\bm y}_i-\bm x^* \right\rVert_2\leq 0.49\sigma\sqrt\alpha \text{ with probability at least } 1-\frac{9d}{2n\alpha} \label{eq:yx}\text{ and,}\\
     &\left\lVert \frac{1}{n}\sum\limits_{i=1}^n \bm z_i-E[\bm z] \right\rVert_2\leq 0.49\sigma\sqrt\alpha \text{ with probability at least } 1-\frac{9d}{2n\alpha}.\label{eq:z}
 \end{align}

Futhermore,
\begin{align}
    \left\lVert E[\bm z]\right\rVert_2=  &\left\lVert E\left[(\tilde{\bm y} - \bm x^*)\mathbbm{1}\lbrace \lVert \tilde{\bm y}-\bm x^* \rVert_2> 80\sigma\sqrt{d/\alpha} \rbrace \right]\right\rVert_2\\
    = & \max_{\lVert v\rVert_2=1}  v^\top E\left[(\tilde{\bm y} - \bm x^*)\mathbbm{1}\lbrace \lVert \tilde{\bm y}-\bm x^* \rVert_2> 80\sigma\sqrt{d/\alpha} \rbrace \right]\\
    = &\max_{\lVert v\rVert_2=1}   E\left[v^\top(\tilde{\bm y} - \bm x^*)\mathbbm{1}\lbrace \lVert \tilde{\bm y}-\bm x^* \rVert_2> 80\sigma\sqrt{d/\alpha} \rbrace \right]\\
    \mathop{\leq}\limits^{\text{(a)}} & \max_{\lVert v\rVert_2=1} \sqrt{E[v^\top (\tilde{\bm y}-\bm x^*)]^2P(\lVert \tilde{\bm y}-\bm x^* \rVert_2> 80\sigma\sqrt{d/\alpha} )}\label{eq:cs}\\
    = &\sqrt{\lambda_{\max}\left(\Sigma \right)P(\lVert \tilde{\bm y}-\bm x^* \rVert_2> 80\sigma\sqrt{d/\alpha} )}\\
    \mathop{\leq}\limits^{\text{(b)}} &\sqrt{\sigma^2. \frac{\alpha}{80^2}}\label{eq:p80}\\
    \leq & \frac{1}{80}\sigma\sqrt{\alpha}.\label{eq:Ez}
\end{align}
The inequality (a) follows from Cauchy-Schwarz inequality, and (b) follows from Markov's inequality.
By Markov's inequality we also have that
\begin{equation}\label{eq:G}
    |\bm G|\geq n-\frac{\alpha}{160}n \,\,\text{ with probability at least }\frac{39}{40}.
\end{equation}
From \eqref{eq:start}, \eqref{eq:yx}, \eqref{eq:z}, \eqref{eq:Ez} and \eqref{eq:G}, we get that with probability at least $\frac{39}{40}-\frac{9d}{n\alpha}\geq 0.974$,
    \begin{equation}
    \lVert \tilde{\bm x}-\bm x^* \rVert_2 \leq \sigma\sqrt{\alpha}.
\end{equation}
\end{proof}

\subsection{Face images}
We use 140 face images from Brazilian face database\footnote{https://fei.edu.br/~cet/facedatabase.html}, where 100 of them are well-controlled frontal faces with neutral expressions, which are considered to be inliers. The rest of 40 images either have non-frontal orientation of the face, or have upside-down smiling expressions, which are considered to be outliers. Fig.~\ref{fig:sample} shows the sample inlier and outlier face images. Fig.~\ref{fig:face_mean} shows the true average face of the inliers (oracle solution) and the estimated mean faces by each method, as well as their $\ell_2$ distances to the oracle solution. The proposed $\ell_1$ and $\ell_p$ methods achieve much smaller recovery error than the state-of-the-art methods. The estimated mean faces by the proposed methods also look visually very similar to the oracle solution, which illustrates the efficacy of the proposed $\ell_1$ and $\ell_p$ methods. 

\begin{figure*}[!ht]
	\centering
	\includegraphics[trim=0 28 0 20, clip=true,width=1\linewidth]{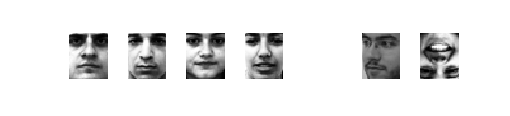}
	\caption{Sample inlier (left 4) and outlier (right 2) face images.}
	\label{fig:sample}
\end{figure*}

\begin{figure*}[h]
	\centering
	\includegraphics[trim=49 78 48 55, clip=true,width=1\linewidth]{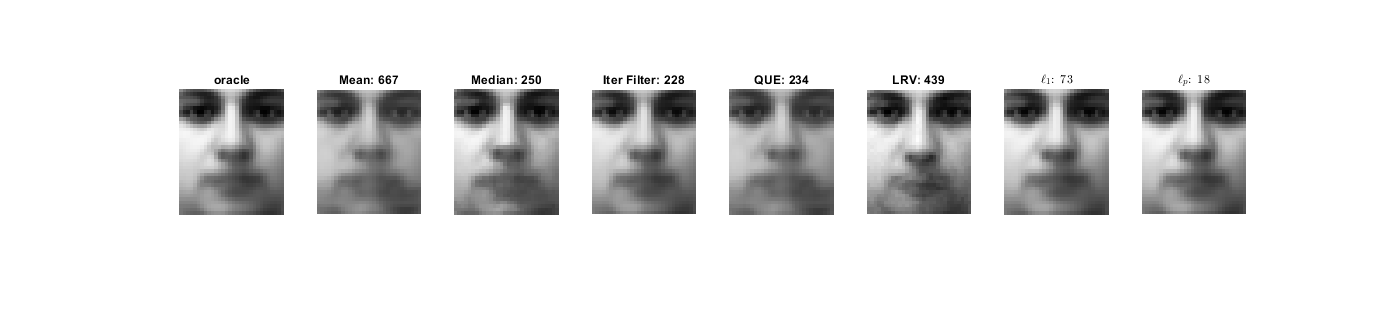}
	\caption{Reconstructed mean face and its recovery error by each method.}
	\label{fig:face_mean}
\end{figure*}

\subsection{Additional experiments}
In this subsection, we test the performance of Iterative Filtering, QUE, LRV, and the proposed $\ell_1$ method under much higher dimension setting of the data as given in Setting B. More specifically, we have $d=2000, n=4000$. Table~\ref{settingB_2000} shows the average recovery error of each method w.r.t. the fraction $\alpha$ of the outlier points. It is evident that the proposed $\ell_1$ method performs much better than the current state-of-the-art methods.
\begin{table}[h]\centering
	\caption{Recovery error of each method under different fraction $\alpha$ of the outlier points (Setting B, $d=2000, n=4000$)}
	\label{settingB_2000}%
	\begin{tabular}{|c|c|c|c|c|c|c|c|}
		
		\hline
		$\alpha$	   & Iter Filter & QUE & LRV &  $\ell_1$ \\
		\hline
		10\% & 0.2713 & 1.055 & 0.5001 &  \textbf{0.0237 } \\
		\hline
		20\% & 0.2828 & 1.148 & 0.9702 & \textbf{0.0256 }  \\
		\hline
		30\% & 0.2851 & 1.321 & 1.9066 &  \textbf{0.0268}  \\
		\hline
	\end{tabular}
\end{table}
\end{document}